\newtheorem {definition}   {Definition}
\newtheorem {lemma}       {Lemma}
\newtheorem {proposition} {Proposition}
\newtheorem {corollary}     {Corollary}
\newtheorem {note}          {Note}
\newtheorem {example}     {Example}
\newcommand {\tup}[1]      {{\langle #1 \rangle}}
\newcommand {\floor}[1]    {{\lfloor #1 \rfloor}}
\newcommand{\powerset}  {\wp}
\newcommand{\ABF}         {{\sf ABF}}
\newcommand{\F}             {{\sf F}}
\newcommand{\CNL}{{\sf Cn}_{\mathfrak L}}
\newcommand{\Atoms}  {{\cal A}}
\newcommand{\qed}         {\hfill$\Box$}
\begin{document}
\title{Argumentative Characterizations of \\ (Extended) Disjunctive Logic Programs}

%

\author{
Jesse Heyninck$^{1,2}$ and Ofer Arieli$^{3}$ \medskip \\
{\small $^1$ Department of Computer Science, Open Universiteit, the Netherlands } \\
{\small $^2$ Department of Computer Science, University of Cape Town, South-Africa } \\
{\small $^3$ School of Computer Science, Tel-Aviv Academic College, Israel}}

\date{}
\maketitle              

\begin{abstract}
This paper continues an established line of research about the relations between argumentation theory, particularly
assumption-based argumentation, and different kinds of logic programs. In particular, we extend known result of 
Bondarenko, Dung, Kowalski and Toni, and of Caminada and Schulz,
by showing that assumption-based argumentation can represent not only normal logic programs, but also disjunctive logic programs
under the stable model semantics. For this, 
we consider some inference rules for disjunction that the core logic of the argumentation frameworks should respect, 
and show the correspondence to the handling of disjunctions in the heads of the logic programs' rules.

Underconsideration in Theory and Practice of Logic Programming (TPLP).
\end{abstract}

\section{Introduction}

Logic programming (LP) and formal argumentation are two primary disciplines involving knowledge representation and non-monotonic reasoning. 
Assumption-based argumentation (ABA, for short )~\cite{Bondarenko1997,CXST18} is a well-established branch in argumentation theory, aimed at providing 
coherent sets of formulas that admit other sets of formulas as their contraries, based on assumptions, contrariness operators, and rules in corresponding deductive 
systems. ABA was inspired by Dung’s semantics for abstract argumentation frameworks~\cite{Dung1995} and logic programming with its dialectical interpretation of 
the acceptability of negation-as-failure assumptions based on the `failure to prove the converse'~\cite{Ll87}. Thus, ABA can be viewed as an argumentative 
interpretation of LP semantics.

The similar ground of LP and ABA calls upon translation methods for revealing the exact relations between them, and for importing 
reasoning methods from one formalism to the other. For example, argumentative characterizations of logic programming have been 
proven useful for explanation~\cite{schulz2015characterising,schulz_toni_2016}, visualization of inferences in logic programming~\cite{schulz2015graphical},  
and debugging of logic programs~\cite{thevapalan2021establish}. Among the works that relate LP and ABA we recall the one of Bondarenko 
et~al.~\cite{Bondarenko1997}, who were the first to show a correspondence between stable models (respectively, the well-founded model) 
of a normal logic program and stable extensions (respectively, the grounded extension) of the associated ABA framework (see Theorem~3.13, respectively
Theorem~6.3 by~\cite{Bondarenko1997}), 
and the work of 
Caminada and Schulz~\cite{caminada2017equivalence,CS18}
that provides a one-to-one correspondence between the 3-valued stable models~\cite{Prz90} (respectively, the regular models~\cite{YY94})
for normal logic programs, and complete labelings (respectively, preferred labelings) for ABA frameworks. 
A recent work~\cite{Wa20} shows that answer sets of a (non-disjunctive) extended logic programs 
can be captured by stable extensions of the translated ABA frameworks. These works were restricted to logic programs, where only atoms 
or their (classical) negations are allowed in the head of the rules. Yet, a faithful modeling of real-world problems often requires to cope with 
\emph{incomplete information\/}, which is not possible in the scope of such logic programs. This is a primary motivation in the introduction of  
\emph{disjunctive logic programs\/}, where (classical) disjunctions are allowed in the heads of the rules and negations 
(sometimes called `negation-as failure',~\cite{gelfond1991classical,Ll87}) may occur in the rules' bodies. Reasoning with uncertainty is also a principle
motivation behind \emph{extended disjunctive logic programs\/}, where a classical negation is also permitted, both in the rules bodies and their heads. Indeed,
disjunctive logic programs have been efficiently implemented and widely applied, and so become a key technology in knowledge representation 
(see, e.g.,~\cite{su2015extensions}), although under the usual complexity assumptions, they were shown to be strictly more expressive than normal logic
programs~\cite{eiter1997disjunctive,gottlob1994complexity}.

Despite the equivalence between ABA semantics and the semantics of normal logic programs that has already been obtained in a number of 
works~\cite{caminada2017equivalence,CS18}, it is not obvious that such a correspondence carries on to disjunctive logic programs.
In fact, there are some a-priori indications to the contrary, at least in some fundamental cases. For instance, the correspondence shown in~\cite{caminada2017equivalence} 
between 3-valued stable models of normal logic programs and complete extensions of ABA, breaks down when disjunctions may appear in the rules' 
heads. This is simply due to the fact that there are disjunctive logic programs without 3-valued stable models~\cite{Pr91}, while (flat) ABA frameworks 
always have complete extensions~\cite{CXST18}. Therefore, in this paper we set out to generalize  the argumentative characterization of logic programming 
for disjunctive logic programs and their extended variations. Naturally, this raises the primary research question of this paper, namely:
{\em ``Can existing translations from normal logic programming into assumption-based argumentation be extended, in some `natural way',  to 
disjunctive logic programs?''\/}

This question is answered affirmatively by showing that,
at least as far as two-valued stable models are involved, disjunctive logics programs (and even extended disjunctive logic 
programs, allowing also strong negation in the rules) {\em can be\/} faithfully represented in terms of assumption-based argumentation frameworks.
For this, we incorporate the ideas in~\cite{AH21,AH25,HA20}, generalizing standard ABA frameworks to propositional formulas (as the defeasible or strict 
assumption at hand), expressed and evaluated in propositional (Tarskian) logics. This allows to augment the underlying core logic of the ABA framework (based only on Modus Ponens,
MP) with the inference rules Resolution (Res)  and Reasoning by Cases (RBC), for handling disjunctive assertions, and so associate the stable extensions of such frameworks
with the (2-valued) stable models of the corresponding disjunctive logic programs. On the other hand, we show that even when 3-valued stable models do exist, they might 
not correspond to complete models in the translated argumentation theory. 

\paragraph{The structure of the paper}
In the next section we review some basic notions behind assumption-based argumentation and 
disjunctive logic programming. In Section~\ref{sec:DLP2ABA}, which is the main part of the paper, we show how (the stable models of) the latter 
can be represented by (the stable extensions of) the former. The converse is discussed in Section~\ref{sec:ABA2DLP}. In Section~\ref{sec:other-semantics}
we provide some negative results, showing that this correspondence does not carry on to models and extensions that are not necessarily two-valued and stable. 
On the other hand, as shown in Section~\ref{sec:EDLP}, our results are easily generalized to extended disjunctive logic programs, where a classical negation 
is also allowed, in addition to the negation-as-failure connective. In Section~\ref{sec:conclusion} (referring also to the appendix) we discuss some related work and conclude.

\paragraph{Relations with previous work}
This paper is an updated and extended version of the paper in~\cite{HA19}, where we first revealed the relations between ABA and disjunctive LP. 
In particular, we give here full proofs for the main results of the paper (in~Section~\ref{sec:correctness-proofs}), including proofs that were omitted in the conference 
paper, and revisions of other proofs in~\cite{HA19},  add further illustrations to the main concepts, consider also 3-valued semantics (and corresponding models) for LP, 
discuss extended logic programs, and refer to related work in more detail. Our work in~\cite{HA19}, which is extended here, closes a gap in the investigations 
of the connections between LP and formal argumentation, mainly in the presence of uncertain information. For this purpose, we extend the logical setting of the work in~\cite{Bondarenko1997,caminada2017equivalence,CS18}, consisting of Modus Ponens (MP) as the sole inference rule, by two additional inference rules, (Res) and (RBC)
mentioned previously, which also allow to handle disjunctive information. This extended logical setting (with some adjustments) serves as a basis for a number 
of other recent follow-up papers, e.g.\ the ones by Wakaki~\cite{Wa22,Wa24}, discussing disjunctive information in ABA frameworks in relation to the way it is evaluated 
by semantics for extended disjunctive logic programs. The work in~\cite{Wa24} shows, furthermore, that (a variation\footnote{See Section~\ref{sec:conclusion} for 
some further details on this variation.} of) our logical setting is also useful for linking ABA frameworks and a number of other formalisms 
for non-monotonic reasoning, including disjunctive default logic~\cite{EC96,GPLT91}, prioritized circumscription~\cite{Li85}, and possible 
model semantics for extended disjunctive logic programs~\cite{SI00}. 
The motivation behind our work is therefore twofold: in the theoretical level it extends, by means of a simple translation, the known links between ABA and LP, 
and on the more practical side it vindicates the usefulness of argumentative and LP-based frameworks  in handling not only inconsistent information, but also 
incomplete one.\footnote{A similar motivation stands behind the proposal of Gelfond et al.\ in~\cite{GPLT91} to extend Reiter’s default logic~\cite{reiter1980logic} 
to disjunctive default logic in order to overcome some problems of the former in handling disjunctive information.}

\section{Preliminaries}

We start with a brief review of the main concepts that are related to assumption-based argumentation (Section~\ref{sec:ABA}) and disjunctive logic programs
(Section~\ref{sec:DLP}).

\subsection{Assumption-Based Argumentation}
\label{sec:ABA}

We denote by ${\mathcal L}$ a propositional language. We shall assume that ${\mathcal L}$ contains a conjunction (denoted as usual in logic programming by a comma), 
disjunction $\vee$, implication $\rightarrow$, a negation operator $\sim$, and propositional constants {\sf F}, {\sf T} for falsity and truth.
Atomic formulas in ${\mathcal L}$ are denoted by $p,q,r$ (possibly indexed), literals (i.e., atomic formulas or their negation) are denoted by $l$ (possibly indexed),
compound formulas are denoted by  $\psi,\phi,\sigma$, and sets of formulas in ${\mathcal L}$ are denoted by $\Gamma,\Delta,\Theta,\Lambda$. 
When considering extended disjunctive logic programs, we shall assume that the language also contains another kind of negation, denoted $\neg$ (the exact meaning 
of each connective will be defined in the sequel). In what follows, we denote by $\sim\!\Gamma$ the set $\{\sim\!\gamma \mid \gamma\in\Gamma\}$.
The powerset of $\Gamma$ is denoted $\powerset(\Gamma)$.

\begin{definition}
\label{def:logic}
{\rm A (propositional) {\em logic\/} for a language ${\mathcal L}$ is a pair ${\mathfrak L} = \tup{{\mathcal L},\vdash}$, where $\vdash$ 
is a (Tarskian) consequence relation for ${\mathcal L}$, that is, a binary relation between sets of formulas and formulas in ${\mathcal L}$, 
satisfying the following properties:
\begin{itemize}
   \item {\sl Reflexivity\/}: if $\psi \in \Gamma$ then $\Gamma \vdash \psi$.
   \item {\sl Monotonicity\/}: if $\Gamma \vdash \psi$ and $\Gamma \subseteq \Gamma^\prime$, then $\Gamma^\prime \vdash \psi$.
   \item {\sl Transitivity\/}: if $\Gamma \vdash \psi$ and $\Gamma^\prime,\psi \vdash \phi$, then $\Gamma,\Gamma^\prime \vdash \phi$.
\end{itemize}   
} \end{definition}

The next definition, adapted from~\cite{HA20}, generalizes the definition in~\cite{Bondarenko1997} of assumption-based argumentation frameworks.

\begin{definition}
\label{def:ABF}
{\rm An \emph{assumption-based framework\/} (ABF, for short) is a tuple $\ABF = \tup{\mathfrak{L}, \Gamma, \Lambda, -}$, where: 
\begin{itemize}
\item ${\mathfrak L} = \tup{{\mathcal L},\vdash}$ is a propositional logic.
\item $\Gamma$ (the \emph{strict assumptions\/}) and $\Lambda$ (the \emph{candidate or defeasible assumptions\/}) 
         are distinct countable sets of $\mathcal{L}$-formulas, where the former is assumed to be $\vdash$-consistent 
         (i,.e, $\Gamma \not\vdash \F$) and the latter is assumed to be nonempty. 
\item $- : \Lambda \rightarrow \powerset(\mathcal{L})$ is a contrariness operator, assigning a finite set of ${\mathcal L}$-formulas 
         to every defeasible assumption in $\Lambda$.
\end{itemize}
} \end{definition}

\begin{note} 
{\rm Unlike the setting of~\cite{Bondarenko1997}, an ABF may be based on {\em any\/} propositional logic $\mathfrak{L}$. 
Also, the strict as well as the candidate assumptions are formulas that may not be just atomic. Concerning the contrariness operator, 
note that it is not a connective of ${\mathcal L}$, as it is restricted only to the defeasible assumptions.
} \end{note}

Defeasible assertions in an ABF may be attacked by counterarguments. 

\begin{definition}
\label{def:non-proiritized-attack}
{\rm Let $\ABF=\tup{\mathfrak{L}, \Gamma, \Lambda,-}$ be an assumption-based framework, $\Delta,\Theta \subseteq \Lambda$, 
and $\psi \in \Lambda$. We say that $\Delta$ \emph{attacks\/} $\psi$ iff $\Gamma,\Delta \vdash \phi$ for some $\phi \in -\psi$. 
Accordingly, $\Delta$ attacks $\Theta$ if $\Delta$ attacks some $\psi \in \Theta$.
} \end{definition}

\begin{example}
\label{examp:1}
{\rm Let ${\sf ABF}=\tup{\mathfrak{L}_{\sf MP}, \{{\sim\!q} \rightarrow p\}, \{{\sim\!p}, \: {\sim\!q}\},-}$, where $-{\sim\!p} = \{p\}$ and 
$-{\sim\!q} = \{q\}$, and where $\mathfrak{L}_{\sf MP}$ consists of the sole inference rule Modus Ponens:
\[ \left. \begin{array}{lll}
    \mbox{[{\sf MP}]} \  \footnotemark \hspace*{2mm} & 
          \displaystyle \frac{ \phi_1,\ldots,\phi_n \rightarrow \psi \hspace{6mm}
                 \phi_1 \hspace{5mm} \phi_2 \hspace{3mm} \cdots \hspace{3mm} \phi_n}   {\psi}
 \end{array} \right. \]
\footnotetext{As usual, the formulas above the fragment line are the rule's conditions and the formula below the line is the rule's conclusion.}
This gives rise to the following visual representation of ${\sf ABF}$ in terms of an {\em attack diagram\/}:
\begin{center}
\begin{tikzpicture}
\tikzset{edge/.style = {->,> = latex'}}

\node (simp) at (-0.5,0) {$\{\sim\! p\}$};
\node (simq) at (3,0) {$\{\sim\! q\}$};
\node (simpsimq) at (1,1.5) {$\{{\sim\!p}, \: {\sim\!q}\}$};

\draw[edge, loop, looseness=4] (simpsimq) to (simpsimq);
\draw[edge] (simq) to (simp);
\draw[edge] (simq) to (simpsimq);
\end{tikzpicture}
\end{center}

This diagram may be viewed as a directed graph whose nodes are sets of defeasible assumptions and where a directed arrow represents an attack of 
the set at the origin of the arrow on the set at the arrow's end. Note that, by MP, it hold that $\sim\! q,\: {\sim\!q} \rightarrow p \vdash p$, and so in our case 
every set that contains ${\sim\!q}$ attacks any set that contains ${\sim\!p}$.
} \end{example}

The last definition gives rise to the following adaptation to ABFs of the usual 
semantics for abstract argumentation frameworks~\cite{Dung1995}.

\begin{definition}
\label{def:semantics} 
{\rm (\cite{Bondarenko1997})
Let $\ABF=\tup{\mathfrak{L}, \Gamma, \Lambda,-}$ be an assumption-based framework, and let $\Delta \subseteq \Lambda$. 
Then $\Delta$ is \emph{conflict-free\/} 
             iff there is no $\Delta'\subseteq\Delta$ that attacks some $\psi \in \Delta$.
We say that $\Delta$ is a \emph{stable extension\/} of $\ABF$ iff it is conflict-free, and attacks every $\psi \in \Lambda\setminus \Delta$. The set of stable extensions
of $\ABF$ is denoted by ${\sf Stb}(\ABF)$.\footnote{In many presentations of assumption-based argumentation, stable extensions are required to be {\em closed\/},
i.e., they should contain any assumption they imply. Since the translation below will always give rise to the so-called \emph{flat} ABFs (that is, ABFs for which 
a set of assumptions can never imply assumptions outside the set; See Note~\ref{note:flat-translation} below), closure of extensions is trivially satisfied in our case.}
} \end{definition}

\begin{example}
\label{examp:1a}
{\rm Consider again the assumption-based argumentation framework in Example~\ref{examp:1}. The sole stable extension of this framework is $\{{\sim\!q}\}$.
} \end{example}

\subsection{Disjunctive Logic Programs}
\label{sec:DLP}

\begin{definition}
\label{def:DLP} 
{\rm (\cite{Pr91}) A {\em disjunctive logic program\/} (DLP) $\pi$ is a finite set of rules of the form 
\[ (\star) \hspace*{15mm} q_1, \ldots, q_m, {\sim\!r_1}, \ldots, {\sim\!r_k} \rightarrow p_1 \lor \ldots \lor p_n  \] 
where $m,k \geq 0$ and $n\geq 1$.\footnote{The assumption that $n \geq 1$ means that in this work we do not consider \emph{constraints\/},
i.e., rules with empty heads (This is also the assumption of Caminada and Schulz in their transformation of normal logic programs to  assumption-based 
argumentation~\cite{caminada2017equivalence,CS18}, which this work extends). The incorporation of constraints can be dealt with by
interpreting constraints $q_1, \ldots, q_m, {\sim\!r_1}, \ldots, {\sim\!r_k} \rightarrow $ 
as rules of the form $q_1, \ldots, q_m, {\sim\!r_1,} \ldots, {\sim\!r_k} \rightarrow {\sf F}$. In turn, this requires to introduce {\em explosiveness assumptions\/},
expressing that from falsity any conclusion may be derived (see, e.g.,~\cite{Wa24}). We leave this extension for future work.} We say that
$p_1 \lor \ldots \lor p_n$ the {\em head\/} (conclusion) of the rule, and  that $q_1, \ldots, q_m, \sim\!r_1, \ldots, \sim\!r_k$
is the {\em body\/} (assumptions) of the rule. 
\begin{itemize}
     \item A logic program $\pi$ is {\em positive\/}, if $k=0$ for every rule in $\pi$ (i.e., the negation as failure operator $\sim$ does not appear in $\pi$). 
     \item When each head of a rule in $\pi$ is  either empty or consists of an atomic formula (i.e., $n \leq 1$), we say that $\pi$ is a {\em normal logic program\/}. 
\end{itemize}     
We denote by $\Atoms(\pi)$  the set of atomic formulas that appear in $\pi$.
} \end{definition}

Intuitively, the rule in~$(\star)$ indicates that if $q_i$ holds  for every $1 \leq i \leq m$ and $r_i$ is not provable 
for every $1 \leq i \leq k$ , then either of the $p_i$'s, for $1 \leq i \leq n$, should hold. In what follows, unless otherwise stated, 
when referring to a logic program we shall mean that it is disjunctive. The semantics of a logic program $\pi$ is defined as follows:

\begin{definition}
\label{def:satisfiability}
\label{def:models}
{\rm  Let $M$ be set of atomic formulas, $p,p_i,q_j$ atomic formulas, and $l_j$ literals. We denote: 
\begin{itemize}
     \item $M \models p$ iff $p \in M$,
     \item $M \models \:\sim\!p$ iff $p \not\in M$,
     \item $M \models p_1 \vee \ldots \vee p_n$ iff $M \models p_i$ for some $1 \leq i \leq n$,
     \item $M \models l_1, \ldots, l_m$ iff $M \models l_j$ for every $1 \leq j \leq m$,
     \item $M \models l_1, \ldots, l_m \rightarrow p_1 \vee \ldots \vee p_n$ iff either $M \models p_1 \vee \ldots \vee p_n$ or  $M \not\models  l_1, \ldots, l_m$ 
             (the latter means that it is {\em not the case\/} that $M \models l_1, \ldots, l_m$). 
\end{itemize}
Note that $M$ may be viewed as an interpretation into $\{t,f\}$, where $M(p) = t$ iff $p \in M$ (iff $M \models p$).
When $M \models \psi$, we say that $M$ {\em satisfies\/} $\psi$. Given a logic program $\pi$, we denote by $M \models \pi$ that $M$ satisfies every $\psi \in \pi$.
In that case, we say that $M$ is a {\em model\/} of $\pi$.
} \end{definition}

Thus, a model of a rule either falsifies at least one of the conjuncts in the rule's body, or validates at least one of the disjuncts in the rule's head.
A particular family of models for disjunctive logic programs called {\em stable\/} (see~\cite{Pr91}) is defined next. 

\begin{definition}
\label{def:stable-model}
{\rm Let $\pi$ be a disjunctive logic program and let  $M \subseteq \Atoms(\pi)$.
\begin{itemize}
    \item The {\em Gelfond-Lifschitz reduct\/}~\cite{GL88} of $\pi$ with respect to $M$ is the (positive) disjunctive logic program $\pi^M$, where  
             $q_1, \ldots, q_m\rightarrow p_1 \lor \ldots \lor p_n \in \pi^M$ iff there is a rule 
             $q_1, \ldots, q_m, \sim\!r_1, \ldots, \sim\!r_k\rightarrow p_1 \lor \ldots \lor p_n \in \pi$ and $r_i \not\in M$ for every
             $1\leqslant i\leqslant k$.
   \item $M$ is a {\em stable model\/} of $\pi$ iff it is a $\subseteq$-minimal model of $\pi^M$.
\end{itemize}
} \end{definition}

\begin{example}
\label{examp:2}
{\rm Consider the disjunctive logic program: $\pi_1=\{{\sim\!p} \rightarrow q \lor r \}$. Below are the different combinations of atoms in this case and their reducts. 
The two stable models of $\pi_1$ are marked by a gray background. 

\smallskip
\def\arraystretch{1.2}
\setlength{\tabcolsep}{4mm}
\begin{center}
\begin{tabular}{|l|l|c||l|l|c|}
\hline 
$i$& $M_i$ & $\pi_1^{M_i}$                                                                                                          & $i$& $M_i$ & $\pi_1^{M_i}$ \\ \hline                      
$1$ & $\emptyset$  & $\{ \rightarrow q \lor r \}$                                                                            & $5$ & $\{p, q\}$ & $\emptyset$ \\ \hline
$2$ & $\{p\}$ & $\emptyset$                                                                                                         & $6$ & $\{p, r\}$ & $\emptyset$ \\ \hline 
\cellcolor{gray!30}$3$ & \cellcolor{gray!30}$\{q\}$ & \cellcolor{gray!30} $\{ \rightarrow q \lor r \}$   & $7$ & $\{q, r\}$ & $\{ \rightarrow  q \lor r \}$ \\ \hline 
 \cellcolor{gray!30}$4$ & \cellcolor{gray!30}$\{r\}$ & \cellcolor{gray!30} $\{ \rightarrow q \lor r \} $  &  $8$ & $\{p, q, r\}$ & $\emptyset$ \\ \hline 
\end{tabular}
\end{center}
Thus, $\{q\}$ and $\{r\}$ are the (only) stable models of $\pi_1$.
} \end{example}

\section{Representation of DLP by ABA}
\label{sec:DLP2ABA}

Given a disjunctive logic program $\pi$, we show a one-to-one correspondence between the stable models of $\pi$ 
onto the stable extensions of an ABA framework that is induced from $\pi$. First, we describe the translation and then 
prove its correctness.

\subsection{The Translation}
\label{sec:translation}

All the ABA frameworks that are induced from disjunctive logic programs will be based on the same core logic, which is constructed by
the three inference rules Modus Ponens (${\sf MP}$), Resolution (${\sf Res}$) and Reasoning by Cases (${\sf RBC}$):
\[ \left. \begin{array}{lll}
 \mbox{[{\sf MP}]} \hspace*{2mm} & 
          \displaystyle \frac{ \phi_1,\ldots,\phi_n \rightarrow \psi  \hspace{6mm}
                 \phi_1 \hspace{5mm} \phi_2 \hspace{3mm} \cdots \hspace{3mm} \phi_n}   {\psi} \bigskip \\  
       \mbox{[{\sf Res}]} \hspace*{2mm} &
          \displaystyle \frac{\psi'_1 \vee \ldots \vee \psi'_m \vee \phi_1 \vee \ldots \vee \phi_n \vee  \psi''_1 \vee \ldots \vee \psi''_k \hspace{6mm}
                 \sim\!\phi_1 \hspace{3mm} \cdots \hspace{3mm} \sim\!\phi_n}   
                {\psi'_1 \vee \ldots \vee \psi'_m \vee \ldots \vee \psi''_1 \vee \ldots \vee \psi''_k} \bigskip \\  
 \mbox{[{\sf RBC}]} \hspace*{2mm} & 
          \displaystyle \frac{\begin{array}{c} \phi_1 \\ \vdots \\ \psi \end{array} \hspace*{7mm}
                                     \begin{array}{c} \phi_2 \\ \vdots \\ \psi \end{array} \hspace*{5mm}
                                     \begin{array}{c} \ \\ \ \\ \cdots \end{array} \hspace{5mm}
                                     \begin{array}{c} \phi_n \\ \vdots \\ \psi \end{array} \hspace*{7mm}
                                     \begin{array}{c} \ \\ \ \\  \phi_1 \vee \ldots \vee \phi_n \end{array}}{\psi}
 \end{array} \right. \]
In what follows we denote by ${\mathfrak L} = \tup{{\mathcal L},\vdash}$ the logic based on the language ${\mathcal L}$ which consists of
disjunctions of atoms ($p_1\lor\ldots \lor p_n$ for $n \geq 1$), negated atoms ($\sim\!p$), or formulas of the forms of the program rules in
Definition~\ref{def:DLP}. Accordingly, we shall use only fragments of the inference rules above, 
in which in {\sf [Res]} and {\sf [RBC]} the formulas $\psi,\psi_i$ are disjunctions of atomic formulas and $\phi_i$ are atomic formulas.
In {[\sf MP]}, $\psi$ is a disjunction of atomic formulas and $\phi_i \in \{p_i, {\sim\!p_i}\}$ are literals. Now, we denote $\Delta\vdash \phi$ iff $\phi$ 
is either in $\Delta$ or is derivable from $\Delta$ using the inference rules above. In other words, $\Delta \vdash \phi$ iff 
$\phi \in {\sf Cn}_{\mathfrak L}(\Delta)$, where ${\sf Cn}_{\mathfrak L}(\Delta)$ is the ${\mathfrak L}$-based transitive closure of 
$\Delta$ (namely, the $\subseteq$-smallest set that contains $\Delta$ and is closed under {\sf [MP]}, {\sf [Res]} and {\sf [RBC]}). 

\begin{note}
\label{note:for:flatness}
{\rm For any $\phi\in\CNL(\Delta)$, if $\phi$ is not of the form $p_1\lor\ldots\lor p_n$ then $\phi\in\Delta$.
}\end{note}

\begin{note}
\label{note:REF}
{\rm Since the rule $\rightarrow \psi $ is identified with ${\sf T}\rightarrow \psi$,  [{\sf MP}] implicitly implies Reflexivity {\sf [Ref]}: 
$\displaystyle \frac{\rightarrow \psi }{\psi}$.
}\end{note}

\begin{definition}
\label{def:LP2ABF}
{\rm  The assumption-based argumentation framework that is {\em induced} by a disjunctive logic program $\pi$ is defined by:
$\ABF(\pi) = \tup{{\mathfrak L},\pi,\sim\!\Atoms(\pi),-}$, where $-\!\sim\!p = \{p\}$ for every $p \in \Atoms(\pi)$.
} \end{definition}

\begin{example}
\label{ex:translations-0}
{\rm Let $\pi_2= \{{\rightarrow\! p \lor q}, \ {p \rightarrow q}, \ {q \rightarrow p}\}$. The attack diagram of  the induced assumption-based argumentation
framework $\ABF(\pi_2)$ is shown in Figure~\ref{fig:ex:1}. 

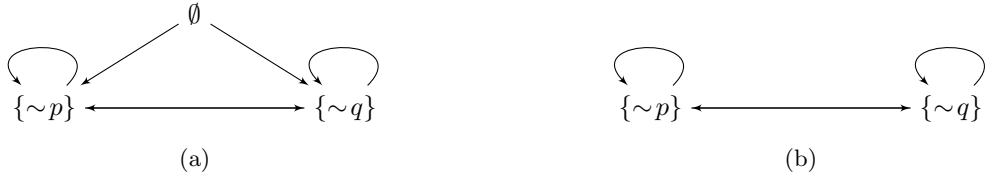
\begin{figure}[hbt]
\centering
\begin{subfigure}{0.5\textwidth}
\centering
\begin{tikzpicture}
\tikzset{edge/.style = {->,> = latex'}}

\node (sima) at (0,0) {$\{\sim\! p\}$};
\node (simb) at (4,0) {$\{\sim\! q\}$};
\node (empty) at (2,1.25) {$\emptyset$};

\draw[edge, loop, looseness=4] (sima) to (sima);
\draw[edge, loop, looseness=4] (simb) to (simb);
\draw[edge] (sima) to (simb);
\draw[edge] (simb) to (sima);

\draw[edge] (empty) to (simb);
\draw[edge] (empty) to (sima);
\end{tikzpicture}
\caption{}
\label{fig:ex:1}
\end{subfigure}~
\begin{subfigure}{0.5\textwidth}
\centering
\begin{tikzpicture}
\tikzset{edge/.style = {->,> = latex'}}

\node (sima) at (0,0) {$\{\sim\! p\}$};
\node (simb) at (4,0) {$\{\sim\! q\}$};
\node (empty) at (2,1.25) {$\emptyset$};

\draw[edge, loop, looseness=4] (sima) to (sima);
\draw[edge, loop, looseness=4] (simb) to (simb);
\draw[edge] (sima) to (simb);
\draw[edge] (simb) to (sima);
\end{tikzpicture}
\caption{}
\label{fig:ex:2}
\end{subfigure}
\caption{Attack diagrams for Examples~\ref{ex:translations-0}, \ref{ex:translations}c  (left) and  Example~\ref{ex:translations}b (right)}
\end{figure}

In the notations of Definition~\ref{def:ABF}, we have: $\Gamma = \pi_2$ and $\Lambda = \{{\sim\!p},{\sim\!q}\}$. To see, e.g., that the set $\{\sim\!p\}$ attacks itself, 
note that by {\sf [MP]} on $\rightarrow p \vee q $ we conclude $\Gamma \vdash p \vee q$, and by {\sf [Res]} it holds that $\Gamma,{\sim\!p} \vdash q$. 
Thus, since  $q\rightarrow p \in \Gamma$, by {\sf [MP]} we get $\Gamma,{\sim\!p} \vdash p$, namely: $\Gamma,{\sim\!p} \vdash {-\!\sim\!p}$.
From similar reasons $\{{\sim\!q}\}$ attacks itself. For the attacks of $\emptyset$ on $\{{\sim\!p}\}$ and $\{{\sim\!q}\}$ we also need {\sf [RBC]}.

Note that in this case $\{p,q\}$ is the stable model of $\pi_2$ (Definition~\ref{def:stable-model}) and $\emptyset$ is the stable extension of 
$\ABF(\pi_2)$ (Definition~\ref{def:semantics}).
} \end{example}

\begin{example}
\label{examp:3}
{\rm Logic programs and their induced ABFs from earlier examples are the following:
\begin{itemize}
     \item [a)] The assumption-based argumentation framework considered in Example~\ref{examp:1} is induced from the logic program 
                     $\{{\sim\!q} \rightarrow p\}$.\footnote{In this case $\mathfrak{L}_{\sf MP}$ is used instead of the extended logic $\mathfrak{L}$, but as shown
                     in~\cite{caminada2017equivalence,CS18}, for normal logic programs this representation is sufficient for the correspondence between LPs and the induced 
                     ABA frameworks.} 
    \item [b)] Figure~\ref{fig:ex:0} depicts (a fragment of) the graph of the assumption-based argumentation framework that is induced by the logic program 
                  $\pi_1 = \{{\sim\!p} \rightarrow q \lor r \}$  from Example~\ref{examp:2}. This framework has two stable extensions: ${\cal E}_1 = \{\sim\!p,\sim\!q,\}$ 
                  and ${\cal E}_2 = \{\sim\!p,\sim\!r,\}$.

                  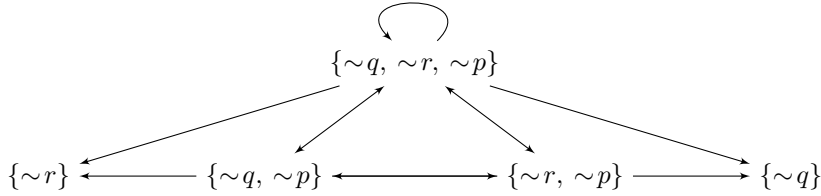
\begin{figure}[hbt]
                  \centering
                  \begin{tikzpicture}
                  \tikzset{edge/.style = {->,> = latex'}}

                  \node (simqrp) at (2,1.5) {$\{{\sim\!p},\: \sim\!q,\: {\sim\!r}\}$};
                  \node (simqp) at (0,0) {$\{{\sim\!p},\: {\sim\!q}\}$};
                  \node (simrp) at (4,0) {$\{{\sim\!p},\: {\sim\!r}\}$};
                  \node (simr) at (-3,0) {$\{{\sim\!r}\}$};
                  \node (simq) at (7,0) {$\{{\sim\!q}\}$};

                  \draw[edge, loop, looseness=4] (simqrp) to (simqrp);
                  \draw[edge] (simqrp) to (simq);
                  \draw[edge] (simqrp) to (simr);
                  \draw[edge,style={<->}] (simqrp) to (simqp);
                  \draw[edge,style={<->}] (simqrp) to (simrp);
                  \draw[edge] (simqp) to (simrp);
                  \draw[edge] (simqp) to (simr);
                  \draw[edge] (simrp) to (simqp);
                  \draw[edge] (simrp) to (simq);

                  \end{tikzpicture}
                  \caption{Attack diagrams for Examples~\ref{examp:2} and~\ref{examp:3}}
                  \label{fig:ex:0}
                  \end{figure}
\end{itemize}
} \end{example}

\begin{note}
\label{note:flat-translation}
{\rm The translation in Definition~\ref{def:LP2ABF} always gives rise to a so-called \emph{flat\/} ABF, that is, an ABF for which there is no 
$\Delta \subseteq \Lambda$ and $\psi \in \Lambda\setminus \Delta$ such that $\Gamma,\Delta\vdash \psi$. This is shown in the following proposition:

\begin{proposition}
For every disjunctive logic program $\pi$ and the induced assumption-based argumentation framework $\ABF(\pi) = \tup{\mathfrak{L}, \Gamma, \Lambda, -} =
\tup{{\mathfrak L},\pi,\sim\!\Atoms(\pi),-}$, if $\Delta\subseteq \Lambda$ and 
$\Gamma,\Delta\vdash \psi$, then $\psi \not\in \Lambda\setminus \Delta$.
\end{proposition}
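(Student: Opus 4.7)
The plan is to exploit the very restricted shape of the three inference rules available in $\mathfrak{L}$, observing that none of them can produce a negated atom as a conclusion. Concretely, inspection of the schemata yields: the conclusion of {[\sf MP]} is $\psi$, which by the restriction stated just before Note~\ref{note:for:flatness} is a disjunction of atoms; the conclusion of {[\sf Res]} is a disjunction of atoms; and the conclusion of {[\sf RBC]} is $\psi$, again a disjunction of atoms. So every formula that is genuinely \emph{derived} (i.e.\ obtained as the conclusion of at least one rule application) must be of the form $p_1 \vee \ldots \vee p_n$.

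With that observation in hand, I would argue as follows. Suppose $\Delta \subseteq \Lambda = \:\sim\!\Atoms(\pi)$ and $\Gamma, \Delta \vdash \psi$ for some $\psi \in \Lambda$, say $\psi = \:\sim\!p$. Since $\sim\!p$ is not a disjunction of atoms, it cannot be the conclusion of any application of {[\sf MP]}, {[\sf Res]}, or {[\sf RBC]}; hence, by the characterization of $\CNL$ as the smallest set containing $\Gamma \cup \Delta$ and closed under these rules (i.e., essentially Note~\ref{note:for:flatness} applied to $\Gamma\cup\Delta$ rather than just $\Delta$), it must already belong to $\Gamma \cup \Delta$. Now $\Gamma = \pi$ consists solely of rules of the form $(\star)$, none of which is a negated atom, so $\sim\!p \notin \Gamma$. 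Therefore $\sim\!p \in \Delta$, which contradicts $\psi \in \Lambda \setminus \Delta$.

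There is no real obstacle here; the only subtlety is checking that the shape restriction on the rules (as stipulated in the paragraph before Note~\ref{note:for:flatness}, where $\psi, \psi_i$ in {[\sf Res]} and {[\sf RBC]} are explicitly disjunctions of atoms and $\psi$ in {[\sf MP]} is likewise a disjunction of atoms) is preserved under any derivation starting from $\Gamma\cup\Delta$, which is immediate by induction on derivation length. I would present the proof in at most a few lines, essentially as a direct corollary of Note~\ref{note:for:flatness} together with the remark that $\Gamma$ contains no literals.
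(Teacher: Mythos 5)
Your proposal is correct and follows essentially the same route as the paper: the paper's proof likewise reduces the claim to Note~\ref{note:for:flatness} (any member of $\CNL(\Gamma\cup\Delta)$ that is not a disjunction of atoms must already lie in $\Gamma\cup\Delta$) and then observes that, since $\Gamma=\pi$ contains only rules, a derivable $\sim\!p$ must belong to $\Delta$. The only difference is that you re-derive that note by inspecting the rule conclusions, whereas the paper simply cites it.
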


\begin{proof}
Since $\Lambda$ consists only of formulas of the form ${\sim\!p}$, we can restrict our attention to such formulas. Suppose that $\Gamma,\Delta\vdash \psi$ 
for some $\Delta\subseteq \Lambda$. Since $\Lambda$ contains only formulas of the form ${\sim\!p}$, and since $\Gamma = \pi$, then by Note~\ref{note:for:flatness}, 
if $\Gamma,\Delta \vdash \:\sim\!p$, necessarily $\sim\!p\in\Delta$. \qed
\end{proof}
} \end{note}

To relate the semantics of logic programs and their induced ABFs, we need the following notations:

\begin{definition}
\label{def:main-notations}
{\rm Let $\pi$ be a disjunctive logic program and let $\Theta \subseteq \Atoms(\pi)$. We denote: 
\begin{itemize}
     \item $\floor{\sim\!\Theta} = \Theta$ (Thus $\floor{\cdot}$ eliminates the leading $\sim$ from the formulas in the set) \smallskip
     \item If $\Delta \subseteq \:\sim\!\Atoms(\pi)$ then $\underline{\Delta} = \Atoms(\pi) \setminus \floor{\Delta}$ \smallskip
     \item If $\Delta \subseteq \Atoms(\pi)$ then $\overline{\Delta} = \:\sim\!(\Atoms(\pi) \setminus \Delta$)  
\end{itemize}
In other words, $\underline{\Delta}$ (respectively, $\overline{\Delta}$) takes the complementary set of $\Delta$ and
removes (respectively, adds) the negation-as-failure operator from (respectively, to) the prefix of its formulas. 
} \end{definition}

\begin{example}
{\rm Consider again the program $\pi_2$ of Example~\ref{ex:translations-0}, and let $\Delta = {\sim\!\Atoms}(\pi_2) = \{{\sim\!p},{\sim\!q\}}$.
Then $\underline{\Delta} = \emptyset$ and  $\overline{\emptyset} = \Delta$. 
} \end{example}

The semantic correspondence between a logic program and the induced ABF is obtained by the following results:
\begin{enumerate}
    \item If $\Delta$ is a stable extension of $\ABF(\pi)$, then $\underline{\Delta}$ is a stable model of $\pi$, and
    \item If $\Delta$ is a stable model of $\pi$, then $\overline{\Delta}$ is a stable extension of $\ABF(\pi)$.
\end{enumerate}

Before showing these results, we first provide some examples and notes concerning related results.

\begin{example}
\label{ex:translations-0b}
{\rm Let's revisit our previous examples and see that the correspondence between the stable models of the logic programs and the stable extensions of the induced
ABFs, indicated above, indeed holds in these examples:
\begin{itemize}
    \item [a)] The stable models $M_3$ and $M_4$ of the logic program $\pi_1$ in Example~\ref{examp:2} correspond to the stable extensions ${\cal E}_1$ and ${\cal E}_2$
                   of the assumption-based framework $\ABF(\pi_1)$ that is induced from $\pi_1$  according to Definition~\ref{def:LP2ABF}, which is considered in Example~\ref{examp:3}(b) 
                   and Figure~\ref{fig:ex:0}. Indeed, $M_3 = \underline{{\cal E}_2}$ and $M_4 = \underline{{\cal E}_1}$ (Alternatively, ${\cal E}_1 = \overline{M_4}$ and 
                   ${\cal E}_2 = \overline{M_3}$).
    \item [b)] As shown in Example~\ref{ex:translations-0}, the correspondence between the stable model of $\pi_2$ and the stable extension of the framework $\ABF(\pi_2)$ that
                   is induced from $\pi_2$ (see Figure~\ref{fig:ex:1}), is preserved.  
\end{itemize}
As indicated before this example, and as  we show in Section~\ref{sec:correctness-proofs} (Propositions~\ref{prop:model-to-extension} and~\ref{prop:extension-to-model}), 
the two items above are not a coincidence.
} \end{example}

\begin{example}
\label{ex:translations}
{\rm As noted in the introduction, Caminada and Schulz~\cite{caminada2017equivalence,CS18} consider the correspondence between ABA frameworks and normal logic programs. 
In our notations, the ABF that they associate with a normal logic program $\pi$ is $\ABF_{{\sf Norm}}(\pi) = \tup{{\mathfrak L}_{\sf MP},\pi,{\sim\!\Atoms(\pi)},-}$ 
constructed as in Definition~\ref{def:LP2ABF}, except that ${\mathfrak L}_{\sf MP}$ is defined by Modus~Ponens only. 
\begin{itemize}
     \item [a)] To see that $\ABF_{{\sf Norm}}$ is not adequate for disjunctive logic programs, consider the program $\pi_3=\{{\rightarrow p \lor q}\}$. 
             This program has two stable models: $\{p\}$ and $\{q\}$. However, the only stable extension of $\ABF_{{\sf Norm}}(\pi_3)$ is 
              $\{\sim\!p, \sim\!q\}$. We can enforce $\{\sim\!p\}$ and $\{\sim\!q\}$ being stable models by requiring that $\pi_3\cup\{\sim\!p\}\vdash q$ 
              and $\pi_3\cup\{\sim\!q\}\vdash p$. For this, we need the resolution rule ${\sf [Res]}$. \smallskip
     \item [b)] Adding only ${\sf [Res]}$ to  ${\mathfrak L}_{\sf MP}$ (i.e, without ${[\sf RBC]}$) as the inference rules for the logic 
              is yet  not sufficient. To see this, consider again the program $\pi_2$ from Example~\ref{ex:translations-0}.
              Recall that $\{p,q\}$ is the sole stable model of $\pi_2$. The attack diagram of the ABF based on {\sf [MP]} and {\sf [Res]} is shown in 
              Figure~\ref{fig:ex:2}, thus there is no stable extension in this case (However, if ${[\sf RBC]}$ is also available, we get the ABF depicted in Figure~\ref{fig:ex:1}, 
              which, as indicated in Example~\ref{ex:translations-0b}b, {\em is\/} a faithful translation of $\pi_2$).
\end{itemize}
} \end{example}

\subsection{Proof of Correctness}
\label{sec:correctness-proofs}

The correctness of the translation follows from Propositions~\ref{prop:model-to-extension} and~\ref{prop:extension-to-model} below. First, we need some 
definitions and lemmas. 
In what follows ${\mathfrak L}$ denotes the logic defined in Section~\ref{sec:translation}, and $\pi$ is an arbitrary disjunctive logic program.

We start with the following soundness and completeness result.

\begin{lemma}
\label{prop:sem:theories}
Let $\Delta$ be a set of ${\mathcal L}$-formulas of the form ${\sim\!p}$ or $q_1,\ldots,q_m,{\sim\!r_1},\ldots,{\sim\!r_k}  \rightarrow p_1\lor \ldots\lor p_n$. Then:
\begin{itemize}
     \item [a)] If $\psi \in {\sf Cn}_{\mathfrak L}(\Delta)$ then $M\models \psi$ for every $M$ such that $M \models \Delta$. 
     \item [b)] If $\psi=r_1\lor \ldots \lor r_m$ and $M\models \psi$ for every $M$ such that $M \models \Delta$, then $\psi \in {\sf Cn}_{\mathfrak L}(\Delta)$.
\end{itemize}
\end{lemma}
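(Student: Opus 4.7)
My plan is to handle the two parts with quite different methods.

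For part (a), I would proceed by induction on the construction of $\mathsf{Cn}_{\mathfrak L}(\Delta)$ (equivalently, on derivation length). The base case $\psi\in\Delta$ is immediate from $M\models\Delta$. In the inductive step I check each of the three inference rules. For [MP], the clause in Definition~\ref{def:models} governing rule satisfaction says precisely that whenever all body literals hold in $M$, some head disjunct does. For [Res], note that $M\models \sim\!\phi_i$ means $\phi_i\notin M$, so if $M$ satisfies the long disjunction one of the remaining disjuncts must lie in $M$. For [RBC], the sub-derivations from each $\phi_i$ to $\psi$ are sound by the induction hypothesis, and since $M\models\phi_1\lor\ldots\lor\phi_n$ forces some $\phi_i\in M$, the corresponding sub-derivation yields $M\models\psi$.

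Part (b) is the real work, and I would argue the contrapositive: assuming $\Delta\not\vdash r_1\lor\ldots\lor r_m$, build a model $M$ of $\Delta$ with $r_i\notin M$ for every $i$. The construction is incremental. Start with $M=\emptyset$ and repeatedly inspect rules of $\Delta$ whose body is satisfied under the current $M$ (positive literals in $M$, negation-as-failure literals absent); whenever such a rule fires and no head disjunct is yet in $M$, select some head disjunct $p_\ell\notin\{r_1,\ldots,r_m\}$ for which subsequent forced propagation through other rules does not inevitably drag in some $r_i$. Because $\Delta$ is finite and $\Atoms(\Delta)$ is finite, the procedure terminates, yielding a set $M$ that satisfies every rule of $\Delta$ and every $\sim\!p\in\Delta$ while avoiding $\{r_1,\ldots,r_m\}$.

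The main obstacle is the no-stuck-state guarantee: I must show that if, at some stage, every choice of head disjunct leads to an unavoidable inclusion of some $r_i$, then $\Delta$ already derives $r_1\lor\ldots\lor r_m$ syntactically, contradicting the assumption. My plan here is to translate a failed branch into a derivation: the activated rule gives $\Delta\vdash p_1\lor\ldots\lor p_n$ by [MP]; disjuncts in $\{r_1,\ldots,r_m\}$ are already permissible targets (so they are just kept in the disjunction), and for each remaining disjunct $p_\ell$ the hypothesis that every downstream extension hits some $r_i$ yields, by a nested induction on the size of the propagation chain, a derivation of $r_1\lor\ldots\lor r_m$ from $\Delta\cup\{p_\ell\}$; finally [RBC] aggregates these into a derivation of $r_1\lor\ldots\lor r_m$ from $\Delta$. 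The delicate point is organizing this recursion so that the induction parameter strictly decreases — a natural choice is to stratify by the number of rules whose body has become applicable but whose head has not yet been realized — and to verify that [Res] (applied against the negation-as-failure literals that have been fixed along the branch) is exactly what is needed to eliminate disjuncts that the current partial model has already rejected. Once the no-stuck-state lemma is in place, the terminal $M$ witnesses the contrapositive and completes the proof.
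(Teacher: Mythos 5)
Your part~(a) is essentially the paper's proof: induction on the length of the derivation with one case per inference rule, and the case analyses you sketch are the right ones. No issues there.

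Part~(b) takes a genuinely different route (a forward-chaining model construction plus a ``no-stuck-state'' completeness lemma, versus the paper's static construction that picks one safe disjunct out of each $\subseteq$-minimal disjunction in $\CNL(\Delta)$), but your key step fails. When you translate a stuck state into a derivation, you write ``the activated rule gives $\Delta\vdash p_1\lor\ldots\lor p_n$ by [{\sf MP}]''. That is not so: [{\sf MP}] requires \emph{every} body formula to be derivable from $\Delta$, and by Note~\ref{note:for:flatness} a negation-as-failure literal $\sim\!q$ is derivable only if $\sim\!q\in\Delta$ itself. Your construction, however, declares a rule applicable as soon as $q$ is merely absent from the current partial model $M$ --- a semantic condition that carries no syntactic warrant. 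The same objection applies to your closing remark that [{\sf Res}] can be applied ``against the negation-as-failure literals that have been fixed along the branch'': [{\sf Res}] may only resolve against $\sim\!q$'s that are literally in $\Delta$, not against exclusions you have chosen during the construction. Concretely, for $\Delta=\{\sim\!p\rightarrow r,\ p\rightarrow r\}$ and $\psi=r$, your procedure is stuck at the very first step (the rule $\sim\!p\rightarrow r$ fires semantically and its only head disjunct is $r$), so your lemma would yield $r\in\CNL(\Delta)$; but neither $p$ nor $\sim\!p$ is derivable from $\Delta$, so no rule can ever be fired by [{\sf MP}] and $\CNL(\Delta)=\Delta$.

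This is not a cosmetic defect: the example shows that the semantic consequence $r$ (true in every model of $\Delta$) is not syntactically derivable, so any completeness argument of your dynamic kind must break somewhere, and yours breaks exactly at the [{\sf MP}] step. The statement of part~(b) is only safe when the $\sim\!$-part of $\Delta$ decides the negation-as-failure literals occurring in rule bodies (as happens in all the applications in the paper, where $\Delta\supseteq\overline{N}$ for some $N$), and a correct proof has to exploit that. The paper's own argument sidesteps your difficulty by never ``firing'' rules semantically --- it builds $M'$ only from disjunctions already present in $\CNL(\Delta)$ --- though you should be aware that it, in turn, omits the verification that the resulting $M'$ satisfies the rules of $\Delta$, which is precisely where the hypothesis on the $\sim\!$-part of $\Delta$ is needed. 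If you want to salvage your approach, restrict attention to such $\Delta$, fire a rule only when its positive body atoms have already been derived and its negated body atoms lie in $\Delta$, and induct on $|\Atoms(\Delta)\setminus M|$ rather than on the number of pending rules (which can increase as atoms are added).
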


\begin{note}
{\rm Part~(b) of Lemma~\ref{prop:sem:theories} does not hold for {\em any formula\/} $\psi$, but only for a disjunction of atoms. 
To see this, let $\Delta=\{{\sim\!s}, \:{p\rightarrow s}\}$. The only $M \subseteq \{p,s\}$ such that $M \models \Delta$ is $M=\emptyset$. 
Thus, for every $M$ such that $M \models \Delta$ it holds that $M \models \:\sim\!p$. However, $\sim\!p$ cannot be derived from $\Delta$
using  ${\sf [MP]}$, ${\sf [Res]}$ and ${\sf [RBC]}$.
} \end{note}

\begin{proof}
We prove Part~(a) of the lemma by induction on the number of applications of the inference rules in the derivation of 
$\psi \in {\sf Cn}_{\mathfrak L}(\Delta)$. 

For the base step, no inference rule is applied in the derivation 
of $\psi$, thus $\psi \in \Delta$. Since $M \models\Delta$, we have that $M \models \psi$.

For the induction step, we consider three cases, each one corresponds to an application of a different inference rule in the last step of the 
derivation of $\psi$:

\begin{enumerate}
\item Suppose that the last step in the derivation
of $\psi$ is an application of Resolution. Then $\psi = p'_1 \vee \ldots \vee p'_m \vee \ldots \vee p''_1 \vee \ldots \vee p''_k$ is
obtained by {{\sf [Res]}} from $p'_1 \vee \ldots \vee p'_m \vee q_1 \vee \ldots \vee q_n \vee  p''_1 \vee \ldots \vee p''_k$ and 
${\sim\!q_i}$ $(i=1,\ldots,n)$. Suppose that $M \models \Delta$. Since ${\sim\!q_i} \in \CNL(\Delta)$ iff ${\sim\!q_i} \in \Delta$ and 
since $M\models \Delta$, we have $M \models {\sim\!q_i}$ $(i=1,\ldots,n)$, thus $M \not\models q_i$ $(i=1,\ldots,n)$. 
By the induction hypothesis, $M \models p'_1 \vee \ldots \vee p'_m \vee q_1 \vee \ldots \vee q_n \vee  p''_1 \vee \ldots \vee p''_k$. 
By Definition~\ref{def:models}, then, $M \models p'_i$ for some $1 \leq i \leq m$, or $M \models p''_j$ for some $1 \leq j \leq k$. 
By Definition~\ref{def:models} again, $M \models \psi$.

\item Suppose that the last step in the derivation of $\psi$ is an application of Reasoning by Cases, and let $M \models \Delta$. By induction hypothesis,
$M \models p_1 \vee \ldots \vee p_n$, and $M \models \psi$ in case that $M \models p_j$ for some $1 \leq j \leq n$.
But by Definition~\ref{def:models} the former assumption means that there is some $1 \leq j \leq n$ for which $M \models p_j$, therefore $M \models\psi$.

\item Suppose that the last step in the derivation of $\psi$ is an application of Modus~Ponens, and let $M \models \Delta$. 
By induction hypothesis $M \models l_i$, where $l_i \in \{p_i,{\sim\!p_i}\}$ for $i =1,\ldots,n$. Thus, by Definition~\ref{def:models},
$M \models l_1, \ldots, l_n$. On the other hand, by induction hypothesis again, $M \models l_1, \ldots, l_n \rightarrow  \psi $. 
By Definition~\ref{def:models}, 
$M \models \psi$.
\end{enumerate}

\noindent We now turn to Part~(b) of the lemma. 
Suppose that $M\models \psi$ for every $M$ such that
$M \models \Delta$, yet $\psi \not\in {\sf Cn}_{\mathfrak L}(\Delta)$ (where $\psi=r_1\lor \ldots \lor r_m$ for some $m\geqslant 1$). 
We show that this leads to a contradiction by constructing an $M'$ for which $M' \models \Delta$ but $M' \not\models\psi$.
For this, we consider the following set of the minimal disjunctions of a set of formulas ${\cal S}$:
\[ {\sf MD}({\cal S})=\{ q_1\lor \ldots\lor q_n \in {\cal S} \mid \not\exists \{i_1,\ldots,i_m\}\subsetneq \{1,\ldots,n\} \text{ s.t. }
    q_{i_1}\lor\ldots\lor q_{i_m} \in {\cal S} \}. \]
We first show that if $q_1\lor\ldots\lor q_n \in {\sf MD}({\sf Cn}_{\mathfrak L}(\Delta))$ then there is an $1\leqslant i\leqslant n$ such that 
$q_i \not\in \{r_1,\ldots,r_m\}$ and ${\sim\!q_i} \not\in \Delta$. Indeed, suppose first for a contradiction that $q_1\lor\ldots\lor q_n \in 
{\sf MD}({\sf Cn}_{\mathfrak L}(\Delta))$, yet  for every $1<i\leqslant n$ either $q_i\in  \{r_1,\ldots,r_m\}$ or ${\sim\!q_i} \in \Delta$.
In that case, by ${{\sf [Res]}}$, $r_1\lor \ldots\lor r_m \in {\sf Cn}_{\mathfrak L}(\Delta)$, contradicting the original supposition that 
$r_1\lor \ldots\lor r_m  \not\in {\sf Cn}_{\mathfrak L}(\Delta)$. Suppose now, again towards a contradiction, that $q_1\lor\ldots\lor q_n \in 
{\sf MD}({\sf Cn}_{\mathfrak L}(\Delta))$, yet for every $1\leqslant i\leqslant n$, ${\sim\!q_i} \in \Delta$. In that case, by ${{\sf [Res]}}$ again, 
$q_i\in {\sf Cn}_{\mathfrak L}(\Delta)$ (for every $i$), but this, together with the assumption the $\sim\!q_i \in \Delta$ (for every $i$), 
contradicts the assumption that there is an $M$ such that $M \models \Delta$.

We thus showed that in any case, if $q_1\lor\ldots\lor q_n \in {\sf MD}({\sf Cn}_{\mathfrak L}(\Delta))$, then there is an 
$1\leqslant i\leqslant n$ such that $q_i \not\in \{r_1,\ldots,r_m\}$ and $\sim\!q_i \not\in \Delta$.

We now construct the model $M'$ such that $M'\models\Delta$ and $M'\not\models r_1\lor \ldots \lor r_m$: Let $M'$ contain exactly one 
$q_i$ with $1\leqslant i\leqslant n$ and $q_i\not\in \{r_1,\ldots,r_m\}$ and ${\sim\!q_i} \not\in\Delta$ for every 
$q_1\lor\ldots\lor q_n \in {\sf MD}({\sf Cn}_{\frak L}(\Delta))$. (If there is more than one such $i$, take $i$ which is minimal among 
$1\leqslant i\leqslant n$.) As shown above, there is at least one such $i$ for every formula 
$q_1\lor\ldots\lor q_n\in {\sf MD}({\sf Cn}_{\mathfrak{L}}(\Delta))$. 

We now show that (1)~$M'\models\Delta$ and (2)~$M'\not\models r_1\lor\ldots\lor r_m$.

Item~(1): Suppose that $\phi\in {\sf Cn}_{\mathfrak{L}}(\Delta)$. We have to consider two possibilities: $\phi= {\sim\! s}$ or 
$\phi=q_1\lor\ldots\lor q_n$. In the first case, by construction, $s\not\in M'$ and thus $M'\models \:\sim\!s$.
In the second case, there is a $q_{i_1}\lor\ldots\lor q_{i_m}\in {\sf MD}({\sf Cn}_{\mathfrak L}(\Delta))$ such that
$\{i_1,\ldots,i_m\}\subseteq \{1,\ldots,n\}$. By construction, there is a $1\leqslant j\leqslant m$ such that $q_{i_j}\in M'$. 
Thus, $M' \models q_1\lor\ldots\lor q_n$.

Item~(2): By construction $r_i\not\in M'$ $(i = 1,\ldots,m)$, thus $M'\not\models r_1\lor \ldots \lor r_m$. \qed
 \end{proof}

\begin{lemma}
\label{lemma:no:bigger:set:cn}
Let $\pi$ be a logic program. For every sets $M,N \subseteq \Atoms(\pi)$, if $N\setminus M \neq \emptyset$ then $N\not\models \CNL(\overline{M}\cup\pi)$.
\end{lemma}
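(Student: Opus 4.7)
The plan is to give a direct witness proof: exhibit an explicit formula in $\CNL(\overline{M}\cup\pi)$ that $N$ fails to satisfy. The hypothesis $N\setminus M\neq\emptyset$ is exactly what provides such a witness, and no induction on derivations is needed.

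First I would pick some $p\in N\setminus M$. Since $N\subseteq \Atoms(\pi)$, we have $p\in \Atoms(\pi)\setminus M$, so by the definition of the overline operator in Definition~\ref{def:main-notations}, $\sim\!p\in\overline{M}$. Consequently $\sim\!p\in\overline{M}\cup\pi$, and by reflexivity of $\vdash$ (the base case of the closure $\CNL$) we get $\sim\!p\in\CNL(\overline{M}\cup\pi)$.

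Next I would invoke the semantic clauses from Definition~\ref{def:satisfiability}: since $p\in N$, we have $N\models p$, and hence $N\not\models\:\sim\!p$. Thus $\sim\!p$ is a formula in $\CNL(\overline{M}\cup\pi)$ that $N$ does not satisfy, which by the usual convention (that $N\models\Theta$ means $N\models\phi$ for every $\phi\in\Theta$) yields $N\not\models\CNL(\overline{M}\cup\pi)$, as required.

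There is essentially no obstacle here; the content of the lemma is just an unpacking of the definitions of $\overline{\,\cdot\,}$ and of satisfaction. The only thing to be slightly careful about is to observe that $p\in\Atoms(\pi)$ (so that $\sim\!p$ really is in $\overline{M}$ rather than missing from the language of the framework), which follows immediately from $N\subseteq\Atoms(\pi)$.
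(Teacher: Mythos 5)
Your proof is correct and is essentially identical to the paper's: both take $p\in N\setminus M$, observe that $\sim\!p\in\overline{M}\subseteq\CNL(\overline{M}\cup\pi)$, and conclude from $N\models p$ that $N\not\models\:\sim\!p$. Your extra remark that $p\in\Atoms(\pi)$ (so that $\sim\!p$ genuinely lies in $\overline{M}$) is a small but worthwhile clarification that the paper leaves implicit.
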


\begin{proof}
Suppose that $N\setminus M\neq \emptyset$ and let $p \in N\setminus M$, Then ${\sim\!p} \in \overline{M}$ and thus 
${\sim\!p} \in \CNL(\overline{M}\cup\pi)$. Since $N \models p$, $N \not\models {\sim\!p} $, and so $N\not\models \CNL(\overline{M}\cup\pi)$. \qed
\end{proof}

\begin{lemma}
\label{lem:the:hard:one} 
Given a logic program $\pi$, if $M$ is a minimal model of a logic program $\pi'\subseteq \pi^M$, then for every $N \subsetneq M$, $N\not\models \CNL(\overline{M}\cup\pi)$.
\end{lemma}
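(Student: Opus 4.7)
My plan is to argue by contradiction: suppose some $N\subsetneq M$ satisfies $\CNL(\overline{M}\cup\pi)$. Since $\overline{M}\cup\pi\subseteq\CNL(\overline{M}\cup\pi)$ by reflexivity, this immediately gives $N\models\pi$ and $N\models\overline{M}$. Unpacking the second, for every $p\in\Atoms(\pi)\setminus M$ we have $\sim\!p\in\overline{M}$, hence $p\notin N$; equivalently, $N\subseteq M$. The strategy is now to lift $N\models\pi$ to $N\models\pi^M$, so that $N\models\pi'$ follows from $\pi'\subseteq\pi^M$, which together with $N\subsetneq M$ contradicts the $\subseteq$-minimality of $M$ as a model of $\pi'$.

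The lifting is the main step. I pick an arbitrary rule $\rho'=q_1,\ldots,q_m\rightarrow p_1\vee\ldots\vee p_n$ in $\pi^M$. By the definition of the Gelfond--Lifschitz reduct there is an originating rule $\rho=q_1,\ldots,q_m,\sim\!r_1,\ldots,\sim\!r_k\rightarrow p_1\vee\ldots\vee p_n$ in $\pi$ with each $r_i\notin M$. Because $N\subseteq M$, none of the $r_i$ lie in $N$ either, so $N\models\sim\!r_i$ for every $i$ by Definition~\ref{def:models}. A short case split finishes the job: if $N\not\models q_1,\ldots,q_m$, then $\rho'$ is vacuously satisfied by $N$; if $N\models q_1,\ldots,q_m$, then combined with $N\models\sim\!r_i$ the full body of $\rho$ is satisfied by $N$, and since $N\models\pi\ni\rho$ we conclude $N\models p_1\vee\ldots\vee p_n$, the head of $\rho'$. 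Either way $N\models\rho'$, so $N\models\pi^M\supseteq\pi'$.

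The step I expect to require the most care is purely bookkeeping: recognising that $N\subseteq M$ is precisely the ingredient that makes every negation-as-failure literal dropped during the reduction automatically satisfied by $N$, so that any subset-of-$M$ model of $\pi$ is automatically a model of $\pi^M$. Once this observation is isolated, the contradiction with the minimality of $M$ in $\pi'$ is immediate and the proof concludes.
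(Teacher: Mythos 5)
Your proof is correct, and it takes a genuinely different and considerably more direct route than the paper's. The key observation you isolate---that for $N\subseteq M$ every negative literal $\sim\!r_i$ discarded in forming $\pi^M$ is automatically satisfied by $N$ (because $r_i\notin M\supseteq N$), so that $N\models\pi$ already yields $N\models\pi^M\supseteq\pi'$---is the standard anti-monotonicity property of the Gelfond--Lifschitz reduct, and combined with $\pi\subseteq\CNL(\overline{M}\cup\pi)$ it settles the claim in a few lines; in fact your argument establishes the slightly stronger fact that no $N\subsetneq M$ satisfies even $\pi$ itself, let alone its $\mathfrak{L}$-closure. The paper instead argues by induction on the size of $M$: it extracts from the minimality of $M$ a particular rule of $\pi'$ whose body $N$ satisfies but whose head it does not, and then runs an extended case analysis on whether $N$ is a minimal model of the set $\pi_N$ of rules of $\pi^N$ that $N$ satisfies, repeatedly invoking Lemmas~\ref{prop:sem:theories} and~\ref{lemma:no:bigger:set:cn} to move between membership in $\CNL(\overline{N}\cup\pi)$ and satisfaction by models of the various reducts. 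None of that machinery is needed on your route: you never consider $\pi^N$ for smaller $N$, never appeal to the soundness/completeness lemma, and need no induction. The paper's detour derives along the way additional information about $\CNL(\overline{N}\cup\pi)$ for subsets $N$, but that information is not required for the statement being proved; for the lemma as stated, your argument is both shorter and easier to verify.
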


\begin{proof}
Suppose that $M$ is a minimal model of a logic program $\pi'\subseteq \pi^M$, and suppose towards a contradiction that there is some $N \subsetneq M$ such that
$N \models \CNL(\overline{M}\cup\pi)$. We show that $N$ is a model of $\pi'$ by showing that $N$ is a model of $\pi^{M}$. Indeed, let
$p_1,\ldots,p_n,{\sim\!q_1},\ldots,{\sim\!q_m} \rightarrow r_1 \vee \ldots \vee r_k\in \pi$ such that $q_1,\ldots,q_m \not\in M$. 
Then ${\sim\!q_i} \in \overline{M}$ for every $i=1,\ldots,m$. Suppose furthermore that $ p_1,\ldots,p_n \in N$. 
Then (since $N\subset M$), also $p_1,\ldots,p_n \in M$ and thus $M \vdash r_1\lor\ldots\lor r_k$, which implies that $r_1\lor\ldots\lor r_k \in \CNL(\overline{M}\cup\pi)$. 
It follows that $r_i \in N$ for some $i=1\ldots,k$ (as we assumed that $N \models \CNL(\overline{M}\cup\pi)$), and so $N$ is a model of the rule  
$p_1,\ldots,p_n \rightarrow r_1 \vee \ldots \vee r_k \in \pi^M$. This means that $N$ is a model of $\pi'$, which contradicts the minimality of $M$. \qed
\end{proof}

\begin{lemma}
\label{lemma:min:model:of:cn}
Let $M$  be a stable model  of $\pi$.
Then $M = \min_\subseteq \{ N \subseteq \Atoms(\pi) \mid N \models {\sf Cn}_{\mathfrak L}(\pi \cup \overline{M}) \}$.
\end{lemma}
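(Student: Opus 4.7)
The plan is to establish that $M$ is in fact the \emph{unique} model of $\CNL(\pi \cup \overline{M})$ among subsets of $\Atoms(\pi)$, from which $M = \min_\subseteq \{N \subseteq \Atoms(\pi) \mid N \models \CNL(\pi \cup \overline{M})\}$ follows at once. So I need to prove (i)~$M \models \CNL(\pi \cup \overline{M})$, and (ii)~every $N \neq M$ with $N \subseteq \Atoms(\pi)$ fails to satisfy $\CNL(\pi \cup \overline{M})$.

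For (i), by Lemma~\ref{prop:sem:theories}(a) it suffices to show $M \models \pi \cup \overline{M}$. The fact $M \models \overline{M}$ is immediate from Definitions~\ref{def:main-notations} and~\ref{def:models}: for every $p \in \Atoms(\pi) \setminus M$ we have $p \notin M$, so $M \models\ \sim\!p$. To see $M \models \pi$, take any rule $q_1,\ldots,q_m,\sim\!r_1,\ldots,\sim\!r_k \rightarrow p_1 \lor \ldots \lor p_n$ of $\pi$. If some $q_j \notin M$ or some $r_i \in M$, the body is falsified and the rule is satisfied vacuously; otherwise every $r_i \notin M$, so the reduced rule $q_1,\ldots,q_m \rightarrow p_1 \lor \ldots \lor p_n$ lies in $\pi^M$, and since $M$ is a stable model of $\pi$ it satisfies $\pi^M$ and in particular satisfies the head.

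For (ii), I split on how $N$ sits relative to $M$. If $N \setminus M \neq \emptyset$, Lemma~\ref{lemma:no:bigger:set:cn} gives $N \not\models \CNL(\pi \cup \overline{M})$ immediately. If on the other hand $N \subsetneq M$, I invoke Lemma~\ref{lem:the:hard:one} with the choice $\pi' = \pi^M$; this choice is permitted because $M$ being a stable model of $\pi$ means exactly that $M$ is a minimal model of $\pi^M$, so the hypothesis of Lemma~\ref{lem:the:hard:one} is met, and the conclusion is again $N \not\models \CNL(\pi \cup \overline{M})$. Every $N \neq M$ belongs to at least one of these two cases, so (ii) is complete.

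The substantive obstacle has already been dispatched in Lemma~\ref{lem:the:hard:one}; the present lemma is essentially a bookkeeping corollary that combines that lemma with Lemma~\ref{lemma:no:bigger:set:cn} and the straightforward verification in step~(i), so I do not expect any additional difficulty.
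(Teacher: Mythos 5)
Your proof is correct and follows essentially the same route as the paper's: you establish that $M$ is the \emph{unique} subset of $\Atoms(\pi)$ satisfying $\CNL(\pi\cup\overline{M})$, disposing of $N\setminus M\neq\emptyset$ via Lemma~\ref{lemma:no:bigger:set:cn} and of $N\subsetneq M$ via Lemma~\ref{lem:the:hard:one} instantiated with $\pi'=\pi^M$ (legitimate, since stability of $M$ is exactly minimal modelhood of $\pi^M$). The only difference is cosmetic: for part~(i) the paper re-runs the induction on derivations to show $M\models\CNL(\pi\cup\overline{M})$, whereas you obtain it more economically by verifying $M\models\pi\cup\overline{M}$ directly and then citing the soundness half of Lemma~\ref{prop:sem:theories}.
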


\begin{proof}
Let $M$ be a stable model of $\pi$. We first show that $M\models \CNL(\overline{M}\cup\pi)$. Let $\psi \in \CNL(\overline{M}\cup\pi)$. 
Then it has an ${\mathfrak L}$-derivation ${\sf D}_{\mathfrak L}(\psi)$.\footnote{Namely, ${\sf D}_{\mathfrak L}(\psi)$ is a finite sequence
$\tup{T_1,\ldots,T_n}$ of proof tuples, where $T_n = \psi$, and for each $1 \leq i \leq n$ $T_i$ is either an element of $\overline{M}\cup\pi$, or
is obtained by an application of one of the inference rules of ${\mathfrak L}$ on proof tuples in $\{T_1,\ldots,T_{i-1}\}$. The size of ${\sf D}_{\mathfrak L}(\psi)$
is $n$.} We show by induction on the size of ${\sf D}_{\mathfrak L}(\psi)$ that $M \models \psi$.

For the base step, no inference rule is applied in ${\sf D}_{\mathfrak L}(\psi)$, thus $\psi = {\sim\!\phi} \in \overline{M}$. Since this means that $\phi\not\in M$, 
we have that $M \models \psi$.

For the induction step, we consider three cases, each one corresponds to an application of a different inference rule in the 
last step of ${\sf D}_{\mathfrak L}(\psi)$:
\begin{enumerate}
\item Suppose that the last step in ${\sf D}_{\mathfrak L}(\psi)$ is an application of Resolution. 
Then $\psi = p'_1 \vee \ldots \vee p'_m \vee \ldots \vee p''_1 \vee \ldots \vee p''_k$ is
obtained by {[{\sf Res}]} from $p'_1 \vee \ldots \vee p'_m \vee q_1 \vee \ldots \vee q_n \vee  p''_1 \vee \ldots \vee p''_k$ and 
${\sim\!q_i}$ $(i=1,\ldots,n)$. Since ${\sim\!q_i} \in {\sf Cn}_{\mathfrak L}(\overline{M}\cup\pi)$ means that ${\sim\!q_i} \in \overline{M}$, 
we have $q_i\not\in M$ for every $i= 1,\ldots,n$. By the induction hypothesis, 
$M \models  p'_1 \vee \ldots \vee p'_m \vee q_1 \vee \ldots \vee q_n \vee  p''_1 \vee \ldots \vee p''_k$. 
Thus, by Definition~\ref{def:models}, $M \models p'_i$ for some $1 \leq i \leq m$, or 
$M \models p''_j$ for some $1 \leq j \leq k$. By Definition~\ref{def:models} again, $M \models \psi$.

\item Suppose that the last step in  ${\sf D}_{\mathfrak L}(\psi)$ is an application of Reasoning by Cases. 
By induction hypothesis we know that $M \models p_1 \vee \ldots \vee p_n$, and that $M \models \psi$ in case that $M \models p_j$ 
for some $1 \leq j \leq n$. But by Definition~\ref{def:models} the former assumption means that there is some $1 \leq j \leq n$ for 
which $M \models p_j$, therefore $M \models\psi$.

\item Suppose that the last step in ${\sf D}_{\mathfrak L}(\psi)$ is an application of Modus Ponens 
on $p_1,\ldots,p_n,{\sim\!q_1},\ldots,{\sim\!q_m} \allowbreak \rightarrow r_1\lor\ldots\lor r_k \in \pi$. By induction hypothesis $M \models p_i$, 
for every $1\leqslant i\leqslant n$. Also, for every $1\leqslant i\leqslant m$, ${\sim\!q_i} \in \CNL(\overline{M}\cup\pi)$ implies $q_i\not \in M$. 
Thus, $p_1,\ldots,p_n\rightarrow  r_1\lor\ldots\lor r_{k} \in \pi^M$. Since $M$ is a model of $\pi^M$, $M\models r_i$ for some $1\leqslant i\leqslant k$.
Thus, $M\models r_1\lor\ldots\lor r_k$.
\end{enumerate}
We have shown that $M\models \CNL(\overline{M}\cup\pi)$. By Lemma~\ref{lemma:no:bigger:set:cn}, for no $N\subseteq \Atoms(\pi)$ 
such that $N\setminus M\neq \emptyset$ it holds that $N\models \CNL(\overline{M}\cup\pi)$.
Thus, if there is some $N\subseteq \Atoms(\pi)$ such that $N\models Cn_{\frak L}(\overline{M}\cup\pi)$, then $N\subseteq M$. 
But if $N\subset M$, by Lemma~\ref{lem:the:hard:one} we have that $N\not \models  \CNL(\overline{M}\cup\pi)$. Thus, $M$ is the 
unique subset of $\Atoms(\pi)$ which is a model of $ \CNL(\overline{M}\cup\pi)$. \qed
\end{proof}

\begin{corollary}
\label{corol:pink}
Let $M$ be a stable model of $\pi$. Then $p \in M$ iff $p \in\CNL(\overline{M}\cup\pi)$. 
\end{corollary}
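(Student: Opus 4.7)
The corollary is a direct consequence of Lemma~\ref{lemma:min:model:of:cn} combined with the soundness and completeness result in Lemma~\ref{prop:sem:theories}. The plan is to treat the single atom $p$ as a degenerate disjunction of length one (which fits the format $r_1\lor\ldots\lor r_m$ with $m=1$ in Part~(b) of Lemma~\ref{prop:sem:theories}) and then appeal to the two directions of Lemma~\ref{prop:sem:theories} to get each implication of the iff.

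For the backward direction, suppose $p\in\CNL(\overline{M}\cup\pi)$. The proof of Lemma~\ref{lemma:min:model:of:cn} explicitly establishes that $M\models\CNL(\overline{M}\cup\pi)$ (this is the first thing shown in its induction). Applying Part~(a) of Lemma~\ref{prop:sem:theories} with $\Delta=\overline{M}\cup\pi$ and $\psi=p$ then yields $M\models p$, which by Definition~\ref{def:models} means precisely that $p\in M$.

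For the forward direction, suppose $p\in M$. By Lemma~\ref{lemma:min:model:of:cn}, $M$ is the unique subset of $\Atoms(\pi)$ that satisfies $\CNL(\overline{M}\cup\pi)$; equivalently (since the semantics of Definition~\ref{def:models} only involves assignments to atoms of $\pi$), $M$ is the unique model of $\overline{M}\cup\pi$ over $\Atoms(\pi)$. Hence every model $N$ of $\overline{M}\cup\pi$ satisfies $p$, and by Part~(b) of Lemma~\ref{prop:sem:theories} with $\psi=p$ we conclude $p\in\CNL(\overline{M}\cup\pi)$.

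I do not anticipate any real obstacle here: the corollary is essentially a restatement of the uniqueness of $M$ as a model of $\CNL(\overline{M}\cup\pi)$ in terms of atom-level membership, and both of its implications are one-line invocations of already-proved results. The only minor point to be careful about is that Part~(b) of Lemma~\ref{prop:sem:theories} applies only to disjunctions of atoms, but $p$ itself is such a disjunction (trivially, with one disjunct), so this is not a genuine difficulty.
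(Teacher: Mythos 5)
Your proof is correct and follows essentially the same route as the paper's: both directions are obtained from the uniqueness of $M$ as a model of $\CNL(\overline{M}\cup\pi)$ (Lemma~\ref{lemma:min:model:of:cn}) together with the two parts of Lemma~\ref{prop:sem:theories}, treating $p$ as a one-disjunct disjunction. The only difference is that you spell out the intermediate step (models of $\overline{M}\cup\pi$ coincide with models of $\CNL(\overline{M}\cup\pi)$, so Part~(b) applies) which the paper leaves implicit.
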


\begin{proof}
Suppose first that $p\in M$. Thus $M\models p$. Since  by Lemma \ref{lemma:min:model:of:cn} $M$ is the unique model of $\CNL(\overline{M}\cup \pi)$, 
by Lemma~\ref{prop:sem:theories} it holds that $M\models p$ implies that $p \in \CNL(\overline{M}\cup\pi)$. For the converse, suppose
that $p \in \CNL(\overline{M}\cup\pi)$. By Lemma~\ref{lemma:min:model:of:cn}, $M\models \CNL(\overline{M}\cup\pi)$, and so $M\models p$. \qed
\end{proof}

\begin{example}
{\rm Corollary~\ref{corol:pink} actually says that any atom $p$ that is verified by a 2-valued sable model $M$ of $\pi$, can be derived by 
assuming that all the assumptions not in $M$ are false (and using the derivation rules together with the rules of the program). To illustrate this, take 
for example the program $\{{\sim\!q}\rightarrow p_1\lor p_2\}$. This program has two stable models: $\{p_1\}$ and $\{p_2\}$. The result above then says 
that $p_1$ belongs to $\CNL(\overline{\{p_1\}}\cup\pi)=\CNL(\{{\sim\!p_2},{\sim\!q}\}\cup\pi)$. Indeed, it holds that $p_1$ is derivable from 
$\{{\sim\!p_2},{\sim\!q}\}\cup\pi$ using the rules $\{{\sim\!q}\}\cup \{{\sim\!q}\rightarrow p_1\lor p_2\} \vdash p_1\lor p_2$ and $\{p_1\lor p_2,{\sim\!p_2}\}\vdash p_1$.
} \end{example}

\begin{lemma}
Let $\pi$ be a disjunctive logic program, $\Delta \subseteq {\sim\!\Atoms}(\pi)$, and $r_1\lor\ldots\lor r_k \in {\sf Cn}_{\mathfrak L}(\pi \cup \Delta)$. 
If $M$ is a model of $\pi^{\floor{\Delta}}$ and $M \subseteq \floor{\Delta}$, then $r_i\in M$ for some $1\leqslant i\leqslant k$.
\end{lemma}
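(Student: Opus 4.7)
The plan is to prove this by induction on the length of an $\mathfrak{L}$-derivation witnessing $r_1\lor\ldots\lor r_k\in\CNL(\pi\cup\Delta)$. The base case is vacuous: an element of $\pi\cup\Delta$ is either a program rule or a negated atom, neither of which is a disjunction of atoms, so no ``axiomatic'' derivation of $r_1\lor\ldots\lor r_k$ exists. For the inductive step, I consider which of the three inference rules [{\sf MP}], [{\sf Res}], [{\sf RBC}] was applied last.

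For the [{\sf Res}] case, $r_1\lor\ldots\lor r_k$ is obtained from a longer disjunction $r_1\lor\ldots\lor r_k\lor q_1\lor\ldots\lor q_n$ by resolving against $\sim\!q_j$ for $j=1,\ldots,n$. Each $\sim\!q_j$ must lie in $\Delta$ by Note~\ref{note:for:flatness}, so $q_j\in\floor{\Delta}$; the induction hypothesis applied to the longer disjunction provides a disjunct in $M$, and the assumption linking $M$ and $\floor{\Delta}$ then forces that disjunct to be one of the $r_i$ rather than one of the $q_j$. For the [{\sf RBC}] case, the induction hypothesis applied to the principal disjunction $\phi_1\lor\ldots\lor\phi_n$ picks out some $\phi_i\in M$, after which the induction hypothesis applied to the sub-derivation of $r_1\lor\ldots\lor r_k$ in $\CNL(\pi\cup\Delta\cup\{\phi_i\})$ yields $r_j\in M$ for some $j$; a small verification is needed that the enlarged premise set still satisfies the lemma's hypotheses, which is where the inductively obtained $\phi_i\in M$ is used.

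The [{\sf MP}] case is where I expect the main difficulty. The rule applied, $q_1,\ldots,q_l,\sim\!s_1,\ldots,\sim\!s_m\rightarrow r_1\lor\ldots\lor r_k\in\pi$, has each $\sim\!s_j$ in $\Delta$ by Note~\ref{note:for:flatness}, so every $s_j\in\floor{\Delta}$. By the definition of the Gelfond-Lifschitz reduct, this rule is then \emph{removed} from $\pi^{\floor{\Delta}}$, so that $M$'s being a model of the reduct does not directly constrain the rule at all. The heart of the argument must therefore combine the inductively established conclusions $q_i\in M$ for each body atom $q_i$ with the structural assumption $M\subseteq\floor{\Delta}$ to nonetheless force some $r_i\in M$; reconciling the derivation's use of an original rule from $\pi$ with the reduct's omission of that rule is the principal obstacle I anticipate, and is likely what drives the particular choice of reduct together with the containment condition on $M$ in the hypothesis.
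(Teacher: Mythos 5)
Your outline for the {\sf [Res]} and {\sf [RBC]} cases follows essentially the same route as the paper's proof, but the {\sf [MP]} case, which you leave open, is a genuine gap in the attempt --- and your diagnosis of \emph{why} it resists is exactly right: as literally written, the statement is false. Take $\pi=\{{\sim\!p}\rightarrow q\}$ and $\Delta=\{{\sim\!p}\}$, so that $q\in\CNL(\pi\cup\Delta)$ and $\floor{\Delta}=\{p\}$; then $\pi^{\floor{\Delta}}=\emptyset$, and $M=\emptyset$ is a model of it with $M\subseteq\floor{\Delta}$, yet $q\notin M$. The resolution is that both occurrences of $\floor{\Delta}$ in the hypothesis are slips for $\underline{\Delta}=\Atoms(\pi)\setminus\floor{\Delta}$. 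This is the reading forced by every later use of the result: in Proposition~\ref{prop:model-to-extension} it is invoked with $\Delta=\overline{M}$, so $\underline{\Delta}=M$ and the reduct is $\pi^{M}$, and in Proposition~\ref{prop:extension-to-model} with $\Delta={\cal E}$ and the reduct $\pi^{\underline{{\cal E}}}$; it is also what the paper's own proof silently uses. Under that reading the {\sf [MP]} case is immediate rather than the ``principal obstacle'': ${\sim\!s_j}\in\Delta$ gives $s_j\in\floor{\Delta}$, hence $s_j\notin\underline{\Delta}$, so by Definition~\ref{def:stable-model} the reduced rule $q_1,\ldots,q_l\rightarrow r_1\lor\ldots\lor r_k$ \emph{survives} into $\pi^{\underline{\Delta}}$; the induction hypothesis applied to each one-disjunct conclusion $q_i$ gives $q_i\in M$, and $M$ being a model of the reduct then forces $r_i\in M$ for some $i$.

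Note that the same correction is needed to make your {\sf [Res]} case sound: to exclude the resolved-away disjuncts you need $q_j\notin M$, and from $q_j\in\floor{\Delta}$ this follows when $M\subseteq\underline{\Delta}$, but not when $M\subseteq\floor{\Delta}$ (both sets would then simply lie in $\floor{\Delta}$). With $\underline{\Delta}$ in place of $\floor{\Delta}$ throughout, your base case (vacuous, since bare disjunctions of atoms never lie in $\pi\cup\Delta$), your {\sf [Res]} case, and your {\sf [RBC]} case --- including the ``small verification'' that $\phi_i$ must be re-packaged as the rule $\rightarrow\phi_i$, so that $(\pi\cup\{\rightarrow\phi_i\})^{\underline{\Delta}}=\pi^{\underline{\Delta}}\cup\{\rightarrow\phi_i\}$ is still modelled by $M$ precisely because $\phi_i\in M$ --- all go through and coincide with the paper's argument; the paper merely folds the one-step {\sf [MP]} derivations into its base case instead of starting from length zero.
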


\begin{proof}
We show by induction on the number of steps used in deriving $r_1\lor\ldots \lor r_k$ from $\pi \cup\Delta$, that  
$r_1\lor\ldots \lor r_k \in {\sf Cn}_{\mathfrak L}(\pi \cup \Delta)$  implies that $ M \models r_1\lor\ldots \lor r_k$.
This means that $M \models r_i$ for some $1\leqslant i\leqslant k$, and so $r_i\in M$ for that $i$.

For the base step, since every element of $\Delta$ is of the form ${\sim\!p}$, we have that $r_1\lor\ldots \lor r_k \not\in \Delta$.
Thus, $r_1\lor\ldots\lor r_k$ is obtained from some rule ${\sim\!q_1},\ldots,{\sim\!q_m} \rightarrow r_1\lor\ldots \lor r_k \in \pi$,
where ${\sim\!q_1},\ldots,{\sim\!q_m} \in\Delta$. In that case, since (1)~$M \subseteq \underline{\Delta}$; (2)~$M$ is a model of 
$\pi^{\floor{\Delta}}$; and (3)~$\rightarrow r_1\lor\ldots \lor r_k \in \pi^{\floor{\Delta}}$, by 
Definition~\ref{def:stable-model} there is a $1\leqslant i\leqslant k$ such that $r_i\in M$.

For the induction step, suppose that the claim holds for every $r_1\lor\ldots \lor r_k$  that is derived from $\Delta$ using $n$ or 
less derivation steps. We consider three cases:
\begin{itemize}
\item $r_1\lor\ldots\lor r_k$ is obtained by applying {\sf[MP]} to the rule $p_1,\ldots,p_n,{\sim\!q_1},\ldots,{\sim\!q_m} \rightarrow r_1\lor\ldots\lor r_k$. 
In this case, ${\sim\!q_1},\ldots,{\sim\!q_m}\in \Delta$ and so $q_i \not\in \underline{\Delta}$ for any $1\leqslant i\leqslant m$.  It follows that 
$p_1,\ldots,p_n\rightarrow r_1\lor\ldots\lor r_k \in \pi^{\floor{\Delta}}$. Let now $M$ be a model of $\pi^{\floor{\Delta}}$ such that $M \subseteq \floor{\Delta}$.  
By the induction hypothesis, $p_i \in {\sf Cn}_{\mathfrak L}(\pi \cup \Delta)$ implies that $p_i \in M$ (for every $1\leqslant i\leqslant n$). Thus, since $M$ is a model
of $\pi^{\floor{\Delta}}$, $r_i \in M$ for some $1 \leqslant i \leqslant k$.

\item $r_1\lor\ldots\lor r_k$ is obtained by applying {\sf[Res]} from $r_1\lor\ldots\lor r_k\lor r_{k+1}\lor \ldots r_{n}$ and 
${\sim\!r}_{k+1},\ldots,{\sim\!r}_n$. Suppose furthermore that $M \subseteq \floor{\Delta}$ is a model of $\pi^{\floor{\Delta}}$. 
By the induction hypothesis and since $r_1\lor\ldots\lor r_k\lor r_{k+1}\lor \ldots r_{n} \in {\sf Cn}_{\mathfrak L}(\pi \cup \Delta)$, we have that
$r_i\in M$ for some $1\leqslant i\leqslant n$. Since $\sim\!r_i\in\Delta$ for $k+1\leqslant i\leqslant n$ and $M\subseteq \floor{\Delta}$, 
$r_i\not\in M$ for every $k+1\leqslant i \leqslant n$. This means that $r_i\in M$ for some $1\leqslant i\leqslant k$.

\item $r_1\lor\ldots\lor r_k$ is obtained by applying {\sf [RBC]}, since $s_1\lor\ldots\lor s_n \in {\sf Cn}_{\mathfrak L}(\pi \cup \Delta)$ and since
$r_1\lor\ldots\lor r_k  \in {\sf Cn}_{\mathfrak L}(\pi \cup {\{\rightarrow s_i\}} \cup \Delta)$ for every $1\leqslant i\leqslant n$. By the induction hypothesis, 
($\dagger$):~for every  $1\leqslant i\leqslant n$ and every model $M'$ of $(\pi \cup {\{\rightarrow s_i\}})^{\floor{\Delta}}$ such that 
$M' \subseteq \underline{\Delta}$, we have that $r_1\lor\ldots\lor r_k \in {\sf Cn}_{\mathfrak L}(\pi \cup {\{\rightarrow s_i\}} \cup \Delta)$ 
implies $r_i\in M'$ for some $1\leqslant i\leqslant k$. Now, since $s_1\lor\ldots\lor s_n \in  {\sf Cn}_{\mathfrak L}(\pi \cup \Delta)$, by the induction
hypothesis this means that $s_j\in M$ for some $1\leqslant j\leqslant n$ for every model $M$ of $\pi^{\floor{\Delta}}$. In other words, $M$ is a
model of $\pi^{\floor{\Delta}}$ iff it is a model of $(\pi \cup \{\rightarrow s_j\})^{\floor{\Delta}}=\pi^{\floor{\Delta}}\cup\{\rightarrow s_j\}$ 
for some $1\leqslant j\leqslant n$. By $(\dagger)$, then, $r_i\in M$ for some $1\leqslant i\leqslant k$. \qed
\end{itemize}
\end{proof}

By the last lemma, in case that $k=1$, we therefore have:

\begin{corollary}
\label{corol:black}
Let $\pi$ be a disjunctive logic program, $\Delta \subseteq {\sim\!\Atoms}(\pi)$ and $r \in {\sf Cn}_{\mathfrak L}(\pi \cup \Delta)$.
If $M$ is a model of $\pi^{\floor{\Delta}}$ and $M \subseteq \floor{\Delta}$, then $r \in M$.
\end{corollary}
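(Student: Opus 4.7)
The plan is to obtain this corollary as the direct specialization of the preceding lemma to the case $k=1$, so essentially no new mathematical content is needed beyond a careful reading of what the lemma yields when the disjunction in the head degenerates into a single atom.

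First I would restate the hypothesis of the corollary in the form required by the lemma: we are given $r \in {\sf Cn}_{\mathfrak L}(\pi \cup \Delta)$ (reading the statement as $r \in {\sf Cn}_{\mathfrak L}(\pi \cup \Delta)$, which is the only sensible reading consistent with the $k=1$ specialization), a set $\Delta \subseteq \sim\!{\cal A}(\pi)$, and a model $M$ of $\pi^{\floor{\Delta}}$ with $M \subseteq \floor{\Delta}$. Viewing $r$ as the trivial disjunction $r_1$ (with $k=1$ and $r_1 = r$), all the hypotheses of the previous lemma are met.

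Applying the lemma then yields the existence of some $1 \leq i \leq k = 1$ with $r_i \in M$, which collapses to $r \in M$. Since the lemma's proof is already carried out by induction on the derivation length (splitting into the {\sf [MP]}, {\sf [Res]} and {\sf [RBC]} cases), no further induction is required here. The main obstacle, if any, is purely bookkeeping: verifying that the notation $r_1 \lor \ldots \lor r_k$ in the lemma is understood to cover the case of a single disjunct (so that the lemma genuinely instantiates to the statement of this corollary), and that the side conditions $M \subseteq \floor{\Delta}$ and $M \models \pi^{\floor{\Delta}}$ transfer verbatim. Once this is noted, the corollary follows immediately, and the proof reduces to a one-line appeal to the preceding lemma with $k=1$.
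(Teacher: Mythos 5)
Your proposal is correct and matches the paper exactly: the corollary is stated there as the immediate $k=1$ instance of the preceding lemma (the paper introduces it with ``By the last lemma, in case that $k=1$, we therefore have''), and you rightly note that the statement's garbled hypothesis should be read as $r \in {\sf Cn}_{\mathfrak L}(\pi \cup \Delta)$ with $\Delta \subseteq\; \sim\!\Atoms(\pi)$. No further argument is needed.
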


Now we can show the main results of this section.

\begin{proposition}
\label{prop:model-to-extension}
If $M$ is a stable model of $\pi$, then $\overline{M}$ is a stable extension of $\ABF(\pi)$.
\end{proposition}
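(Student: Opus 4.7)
The plan is to reduce both required properties of a stable extension directly to Corollary~\ref{corol:pink}, which gives the tight equivalence $p\in M \iff p \in \CNL(\overline{M}\cup\pi)$. Since $\overline{M} = \sim\!(\Atoms(\pi)\setminus M)$ lists exactly those $\sim\!p$ with $p\notin M$, what remains is bookkeeping about who attacks whom in $\ABF(\pi)$.

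First I would verify conflict-freeness of $\overline{M}$. Assume for contradiction that some $\Delta' \subseteq \overline{M}$ attacks some $\sim\!p \in \overline{M}$. By Definition~\ref{def:non-proiritized-attack}, this means $\pi, \Delta' \vdash p$, so by monotonicity $p \in \CNL(\pi \cup \overline{M})$. But $\sim\!p \in \overline{M}$ means $p \notin M$, and then Corollary~\ref{corol:pink} yields $p \notin \CNL(\pi \cup \overline{M})$, a contradiction.

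Next I would verify that $\overline{M}$ attacks every defeasible assumption outside it. The defeasible assumptions are $\Lambda = \sim\!\Atoms(\pi)$, and $\Lambda \setminus \overline{M} = \sim\!M$. So pick any $\sim\!p \in \sim\!M$, i.e.\ $p \in M$. By Corollary~\ref{corol:pink}, $p \in \CNL(\pi \cup \overline{M})$, which by the definition of $\CNL$ means $\pi, \overline{M} \vdash p$. Since $-\!\sim\!p = \{p\}$, this is exactly the statement that $\overline{M}$ attacks $\sim\!p$ in the sense of Definition~\ref{def:non-proiritized-attack}.

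The two items together show $\overline{M} \in {\sf Stb}(\ABF(\pi))$. There is no real obstacle at this stage: the heavy lifting has already been absorbed into Lemma~\ref{lemma:min:model:of:cn} and its consequence Corollary~\ref{corol:pink}, which encapsulate the delicate interaction between Gelfond–Lifschitz minimality and derivability under $[{\sf MP}], [{\sf Res}], [{\sf RBC}]$. The only subtlety to state carefully is that the contrariness operator sends $\sim\!p$ to the singleton $\{p\}$, so attacks are in perfect duality with derivations of atoms, which is precisely the format in which Corollary~\ref{corol:pink} supplies the equivalence.
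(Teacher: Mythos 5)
Your proof is correct and follows essentially the same route as the paper: both halves of stability are reduced to the consequences of Lemma~\ref{lemma:min:model:of:cn}, with the attack-everything-outside part handled exactly as in the paper via Corollary~\ref{corol:pink}. The only difference is that for conflict-freeness the paper invokes Corollary~\ref{corol:black} (via models of $\pi^M$), whereas you use the converse direction of Corollary~\ref{corol:pink}; both work, and your explicit treatment of an attacking subset $\Delta'\subseteq\overline{M}$ via monotonicity of $\vdash$ is, if anything, slightly more careful than the paper's wording.
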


\begin{proof}
Suppose that $M$ is a stable model of $\pi$. We show first that $\overline{M}$ is conflict-free in $\ABF(\pi)$. Otherwise, 
there is some $\sim\!p \in \overline{M}$ such that $\pi,\overline{M} \vdash p$. The former implies that $p \not\in M$. 
But since $M$ is a model of $\pi^M$, by Corollary~\ref{corol:black}, $p \in {\sf Cn}_{\mathfrak L}(\pi \cup \overline{M})$ implies that $p \in M$, 
a contradiction to $p \not\in M$

We now show that $\overline{M}$ attacks every ${\sim\!p} \in \:\sim\!{\cal A}(\pi) \setminus \overline{M}$. 
This means that we have to show that $\pi,\overline{M} \vdash p$ for every $p \in M$. This follows from Corollary~\ref{corol:pink}. \qed
\end{proof}

\begin{proposition}
\label{prop:extension-to-model}
If ${\cal E}$ is a stable extension of $\ABF(\pi)$, then $\underline{{\cal E}}$ is a stable model of $\pi$.
\end{proposition}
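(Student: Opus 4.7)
Let $M=\underline{\mathcal{E}}$, so that $\overline{M}=\mathcal{E}$ by Definition~\ref{def:main-notations}. The plan is to verify (i) that $M$ is a model of $\pi^M$, and (ii) that $M$ is $\subseteq$-minimal among such models. The key preliminary observation, which I would state first, is the atom-level analogue of Corollary~\ref{corol:pink}: for every $p\in\Atoms(\pi)$, $p\in M$ iff $p\in\CNL(\pi\cup\mathcal{E})$. Indeed, $p\in M$ means $\sim\!p\notin\mathcal{E}$, so by stability of $\mathcal{E}$ it is attacked by $\mathcal{E}$, giving $p\in\CNL(\pi\cup\mathcal{E})$; conversely, if $\sim\!p\in\mathcal{E}$, then by conflict-freeness $\mathcal{E}$ cannot derive $p$.

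For (i), consider any rule $p_1,\ldots,p_n\rightarrow r_1\vee\ldots\vee r_k\in\pi^M$ originating from a rule $p_1,\ldots,p_n,\sim\!q_1,\ldots,\sim\!q_l\rightarrow r_1\vee\ldots\vee r_k\in\pi$ with all $q_i\notin M$, i.e.\ $\sim\!q_i\in\mathcal{E}$. Assume $p_i\in M$ for every $i$; the equivalence gives $p_i\in\CNL(\pi\cup\mathcal{E})$, so $[{\sf MP}]$ on the original rule yields $r_1\vee\ldots\vee r_k\in\CNL(\pi\cup\mathcal{E})$. If all $r_j\notin M$ then all $\sim\!r_j\in\mathcal{E}$, and (for $k\geq 2$) applying $[{\sf Res}]$ with $\sim\!r_1,\ldots,\sim\!r_{k-1}$ would give $r_k\in\CNL(\pi\cup\mathcal{E})$; but $\sim\!r_k\in\mathcal{E}$, so $\mathcal{E}$ attacks itself, contradicting conflict-freeness (the case $k=1$ is immediate). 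Hence some $r_j\in M$, and $M$ satisfies the reduct rule.

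For (ii), suppose for contradiction that $M'\subsetneq M$ is also a model of $\pi^M$ and pick $p\in M\setminus M'$. The equivalence gives $p\in\CNL(\pi\cup\overline{M})$. The crux is to promote this to $p\in\CNL(\pi^M\cup\overline{M})$. I would do this by transforming any $\mathfrak L$-derivation of $p$ from $\pi\cup\overline{M}$: each application of $[{\sf MP}]$ on a rule $p_1,\ldots,p_n,\sim\!q_1,\ldots,\sim\!q_l\rightarrow h\in\pi$ is replaced by $[{\sf MP}]$ on its reduct $p_1,\ldots,p_n\rightarrow h$. This is legal because each $\sim\!q_i$ premise lies in $\CNL(\pi\cup\overline{M})$, and Note~\ref{note:for:flatness} forces $\sim\!q_i\in\overline{M}$, so $q_i\notin M$ and the reduct rule is in $\pi^M$; the $[{\sf Res}]$ and $[{\sf RBC}]$ steps are unchanged since the $\sim\!\phi_i$ premises they use are likewise in $\overline{M}$. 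The transformed derivation shows $p\in\CNL(\pi^M\cup\overline{M})$. Since $M'\subseteq M$ we have $M'\models\overline{M}$, and by assumption $M'\models\pi^M$, so Lemma~\ref{prop:sem:theories}(a) applied to $\Delta=\pi^M\cup\overline{M}$ yields $M'\models p$, contradicting $p\notin M'$.

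The main obstacle is the minimality step (ii): one cannot directly invoke Lemma~\ref{prop:sem:theories}(a) with $\Delta=\pi\cup\overline{M}$, because a proper subset $M'$ of $M$ that models $\pi^M$ need not model $\pi$ itself -- a rule in $\pi\setminus\pi^M$ can be violated by $M'$ exactly when one of its negated body atoms lies in $M\setminus M'$. Rewriting the derivation of $p$ so that every $[{\sf MP}]$ step uses only reduct rules from $\pi^M$ is precisely what sidesteps this difficulty.
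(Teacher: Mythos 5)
Your proof is correct, and its overall decomposition---first show that $\underline{\mathcal{E}}$ models $\pi^{\underline{\mathcal{E}}}$, then rule out proper sub-models---is the paper's. Part (i) is essentially the paper's argument: stability forces $p_i\in\CNL(\pi\cup\mathcal{E})$ for the unnegated body atoms, {\sf [MP]} yields the head, and {\sf [Res]} plus conflict-freeness forces some head disjunct into $\underline{\mathcal{E}}$. Where you genuinely diverge is the minimality step. The paper dispatches it in one line by citing Corollary~\ref{corol:black}, which rests on a separate bespoke induction on derivations showing directly that any model of $\pi^{\underline{\mathcal{E}}}$ contained in $\underline{\mathcal{E}}$ must satisfy every disjunction in $\CNL(\pi\cup\mathcal{E})$. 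You instead transform the derivation syntactically---replacing each {\sf [MP]} step on a rule of $\pi$ by {\sf [MP]} on its reduct, which is legitimate because Note~\ref{note:for:flatness} pins the negated body atoms to $\overline{M}$---to conclude $p\in\CNL(\pi^M\cup\overline{M})$, and then finish with the already-proved soundness Lemma~\ref{prop:sem:theories}(a). This is more modular (one induction on derivations, reused, rather than two), at the price of having to check that the transformation propagates into the hypothetical sub-derivations of {\sf [RBC]}: your remark that the {\sf [RBC]} steps are unchanged ``since the $\sim\!\phi_i$ premises they use are in $\overline{M}$'' is slightly off, as {\sf [RBC]} has no negated premises---the correct point is that the flatness argument applies recursively inside each case's sub-derivation from $\pi\cup\{\rightarrow s_i\}\cup\overline{M}$, noting $(\pi\cup\{\rightarrow s_i\})^M=\pi^M\cup\{\rightarrow s_i\}$. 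This is easily repaired, and the paper's own handling of the {\sf [RBC]} case is no more detailed. Your closing diagnosis of why Lemma~\ref{prop:sem:theories}(a) cannot be applied directly with $\Delta=\pi\cup\overline{M}$ is exactly the right observation.
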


\begin{proof}
We first show that $\underline{{\cal E}}$ is a model of $\pi^{\underline{{\cal E}}}$. Indeed, let 
$p_1,\ldots,p_n,\sim\!q_1,\ldots, \sim\!q_m\rightarrow r_1\lor \ldots\lor r_k\in\pi$.
If $q_j \in \underline{{\cal E}}$ for some $1 \leq j \leq m$ we are done: the rule is satisfied by $\underline{{\cal E}}$. Otherwise,
$q_1,\ldots,q_m \not\in \underline{{\cal E}}$, and so $p_1,\ldots,p_n\rightarrow r_1\lor\ldots\lor r_k\in\pi^{\underline{{\cal E}}}$. 
Again, if $p_j \not\in \underline{{\cal E}}$ for some $1 \leq j \leq n$ we are done, as the rule is satisfied by $\underline{{\cal E}}$. Thus, 
$p_1,\ldots,p_n \in\underline{{\cal E}}$. In other words, ${\sim\!p_1},\ldots,{\sim\!p_n}\not\in {\cal E}$ and ${\sim\!q_1},\ldots,{\sim\!q_m}\in {\cal E}$. 
Since ${\cal E}$ is a stable extension of $\ABF(\pi)$, this means that $\pi, {\cal  E} \vdash r_1\lor\ldots\lor r_k$. 
Suppose now for a contradiction that ${\sim\!r_i}\in{\cal E}$ for every $1\leqslant i\leqslant k$. Then by {\sf [Res]}, $\pi, {\cal E} \vdash r_i$
for every $1\leqslant i\leqslant k$ and thus ${\cal E}$ attack itself, which contradicts the fact that ${\cal E}$ is conflict-free. 
Consequently, there is at least one $1\leqslant i\leqslant k$ such that ${\sim\!r_i}\not\in{\cal E}$, thus $r_i\in \underline{{\cal E}}$,
which means that $\underline{{\cal E}}$ satisfies $p_1,\ldots,p_n \sim\!q_1,\ldots, \sim\!q_m \rightarrow r_1\lor \ldots\lor r_k$. 

To show the minimality of $\underline{{\cal E}}$, suppose that there is an $M \subsetneq \underline{{\cal E}}$ that is a model of 
$\pi^{ \underline{{\cal E}}}$. Let $p \in \underline{{\cal E}}\setminus M$. Since $p\in\underline{{\cal E}}$,  ${\sim\!p}\not \in {\cal E}$. 
Since ${\cal E}$ is stable, this means that $\pi, {\cal E} \vdash p$. By Corollary~\ref{corol:black}, any model of $\pi^{\underline{{\cal E}}}$ 
satisfies $p$, a contradiction to $p \not\in M$. \qed
\end{proof}

\section{Representation of ABA by DLP}
\label{sec:ABA2DLP}

The main body of literature on ABA frameworks is concentrated on languages that consist solely of formulas of the form $p_1,\ldots,p_n\rightarrow p$ 
(where $p,p_1,\ldots,p_n$ are atomic formulas). As noted previously, for such assumption-based frameworks (or at least when the frameworks are flat) it has been 
shown that there is a straightforward translation into normal logic programs that preserve equivalence for all the commonly studied argumentation 
semantics (see~\cite{caminada2017equivalence,CS18}).  To the best of our knowledge, the more complicated classes of ABA frameworks that are considered 
in this paper for characterizing disjunctive logic programming (and which  are based on a logic allowing to reason with disjunctive rules of the form 
$p_1,\ldots,p_n,{\sim\!q}_1,\ldots,{\sim\!q}_m\rightarrow r_1\lor\ldots\lor r_k$)
have not been investigated for other purposes other than the translation of DLPs. We thus do not see any motivation for investigating the reverse 
translation from these assumption-based frameworks into disjunctive logic programs. We do believe, however, that it is interesting to see if the more 
general class of assumption-based frameworks based on an arbitrary propositional logic (as defined and studied in~\cite{AH21,AH25,HA20}) can be translated 
in a class of logic programs, probably more general than disjunctive ones. 
This is a subject for a future work.

\section{Beyond Two-Valued Stable Semantics}
\label{sec:other-semantics}

So far we have discussed two-valued stable models for (disjunctive) logic programs and their correspondence to the stable extensions of the induced 
ABA frameworks. In this section we check whether similar correspondence may be established between other semantics of logic programs, such as 
three-valued stable semantics, and other types of argumentative extensions.

For switching to a 3-valued semantics we add to the two propositional constants ${\sf T}$ and ${\sf F}$, representing in the language ${\cal L}$ the 
Boolean values, a third elements, denoted ${\sf U}$, which intuitively represents uncertainty. These constants correspond, respectively, to the three
truth values of the underlying semantics, ${\sf t}$, ${\sf f}$, ${\sf u}$, representing truth, falsity and uncertainty. 
As in Kleene's semantics~\cite{Kl50}, these values may be arranged in two orders:
\begin{itemize}
    \item a total order $\leq_t$, representing difference in the amount of {\em truth\/} that each value represent, in which ${\sf f} <_t {\sf u} <_t {\sf t}$, and
    \item a partial order $\leq_i$, representing difference in the amount of {\em information\/} that each value depicts, in which ${\sf u} <_i {\sf f}$ and ${\sf u} <_i {\sf t}$. 
\end{itemize}
In what follows, we denote by $-$ the $\leq_t$-involution, namely $-{\sf f} = {\sf t}$, $-{\sf t} = {\sf f}$, and $-{\sf u} = {\sf u}$.

\medskip
Accordingly, we extend the notion of interpretations of a logic program $\pi$ from subsets of ${\cal A}(\pi)$ (as in Definition~\ref{def:models}) to {\em pairs\/} of 
subsets of ${\cal A}(\pi)$: 

\begin{definition}
{\rm A {\em three-valued interpretation\/} of a program $\pi$ is a pair $M = (x,y)$, where $x \subseteq {\cal A}(\pi)$ is the set of the atoms that 
are assigned the value ${\sf t}$ and $y \subseteq {\cal A}(\pi)$ is the set of atoms assigned a value in $\{{\sf t},{\sf u}\}$. 
} \end{definition}

Clearly, in every three-valued interpretation it holds that $x\subseteq y$, and the two-valued interpretations (onto  $\{{\sf t},{\sf f}\}$) of the previous sections 
may be viewed as three-valued interpretations in which $x = y$.

\medskip
Interpretations may be compared by two order relations, generalized from the order relations among the truth values:
\begin{enumerate}
     \item the \emph{truth order\/} $\leq_t$, where $(x_1,y_1) \leq_t (x_2,y_2)$ iff $x_1 \subseteq x_2$ and $y_1 \subseteq y_2$, and  
     \item the \emph{information order\/} $\leq_i$, where $(x_1,y_1) \leq_i (x_2,y_2)$ iff $x_1 \subseteq x_2$ and $y_2 \subseteq y_1$.
\end{enumerate}
The information order represents differences in the ``precisions'' of the interpretations. Thus, the components of higher values according to this order represent 
tighter evaluations. The truth order represents increased `positive' evaluations. Truth assignments to complex formulas are then recursively defined as in the 
next definition (see also~\cite{Pr91}):

\begin{definition}
\label{def:3-val-int}
{\rm The truth assignments of a 3-valued interpretation $(x,y)$ are defined as follows:
\begin{itemize}
\item $(x,y)(p)=
\begin{cases}
      {\sf t} & \text{ if } {p} \in x \text{ and } {p} \in y,  \\
      {\sf u} & \text{ if } {p} \not\in x \text{ and }{p} \in y,  \\
      {\sf f} & \text{ if } {p} \not\in x \text{ and } {p} \not\in y,  \\
\end{cases}$ \smallskip
\item $(x,y)(\sim\!\phi)=- (x,y)(\phi)$,  
\item $(x,y)(\psi \land \phi)=lub_{\leq_t}\{(x,y)(\phi),(x,y)(\psi)\}$, 
\item $(x,y)(\psi \lor \phi)= glb_{\leq_t}\{(x,y)(\phi),(x,y)(\psi)\}$. 
\end{itemize}
} \end{definition}

The next definition is the three-valued counterpart of Definitions~\ref{def:satisfiability} and~\ref{def:stable-model}.

\begin{definition}
\label{def:3-val-sem-dlp}
{\rm Let $\pi$ be a logic program and let $(x,y)$ be a 3-valued interpretation of $\pi$.
\begin{itemize}
    \item $(x,y)$ is  a \emph{(3--valued) model\/} of $\pi$, if for every $\phi \rightarrow \psi \in \pi$, $(x,y)(\phi) \leq_t (x,y)(\psi)$. 
    \item The {\em Gelfond-Lifschitz transformation\/}~\cite{gelfond1991classical} of a disjunctively normal program
            $\pi$ with respect to a 3-valued interpretation $(x,y)$, denoted $\pi^{(x,y)}$, is the positive program obtained by replacing in every rule in $\pi$ of the form
            $q_1, \ldots, q_m, {\sim\!r_1}, \ldots, {\sim\!r_k} \rightarrow p_1 \lor \ldots \lor p_n$, any negated literal ${\sim\!r_i}$ ($1\leq i\leq k$) by:
            ${\sf F}$ if $(x,y)(r_i)={\sf t}$, ${\sf T}$ if $(x,y)(r_i)={\sf f}$, and ${\sf U}$ if $(x,y)(r_i)={\sf u}$. That is, replacing ${\sim\!r_i}$ by the 
            propositional constant that corresponds to $(x,y)(\sim\!r_i)$.
    \item An interpretation $(x,y)$ is a {\em 3-valued stable model\/} of $\pi$~\cite{Pr91}, if it is a $\leq_t$-minimal model of $\pi^{(x,y)}$.
\end{itemize}    
} \end{definition}

There is a host of other semantics for both normal and disjunctive logic programs, including several semantics that refine the 3-valued stable models by, e.g., selecting only the $\leq_i$-{\em minimal\/} 
3-valued stable models (as is done for normal logic programs in the so-called \emph{3-valued well-founded models}), or taking the $\leq_i$-{\em maximal\/} 3-valued stable models. 
The latter, called {\em M-stable models\/} (or {\em L-stable models\/}), were introduced for DLPs by Sacca and Zaniolo~\cite{sacca1990stable,sacca1991partial,sacca1997deterministic}.  
For normal logic programs, M-stable models coincide with {\em regular models\/}, as defined by You and Yuan~\cite[Definition~8]{YY94}. 
However, for disjunctive logic programs, M-stable models do {\em not\/} coincide with regular models, since the latter are guaranteed to exist (see~\cite[Proposition 5.1]{YY94}) whereas the former are 
not (as 3-valued stable models in general might not exist for DLPs, see~\cite{Pr91}). A detailed overview of the relations between the above-mentioned semantics, as well as other semantics for DLPs, 
is given by Eiter, Leone and Sacc\'a in~\cite{eiter1997partial}.

Likewise, there are many proposals in the literature for extending the well-founded semantics from normal logic programs to disjunctive logic programs. We refer, e.g.,  
to \cite{alcantara2005well,brass1995disjunctive,knorr2007comparison,seipel1998alternating,WZ05} for some examples. However, there is no consensus about which are the most 
suitable ones, or even what should be the criteria for comparing them (see~\cite{alcantara2005well,WZ05}), including existence and uniqueness.  In what follows, for considering the correspondence 
with ABA frameworks, we restrict our attention to the 3-valued stable models, leaving the investigations of other semantics such as the ones discussed above to future work.
\begin{note}
{\rm As shown in~\cite{caminada2017equivalence,CS18},
when $\pi$ is a normal logic program, its 3-valued models that are defined above have some counterparts in terms of the induced assumption-based 
argumentation framework $\ABF_{{\sf Norm}}(\pi)$, described in Example~\ref{ex:translations}. To recall these results, we first need to extend 
the notions in Definition~\ref{def:semantics} with the following argumentative concepts:

\begin{definition} 
\label{def:Dung-semantics}
{\rm Let $\ABF=\tup{\mathfrak{L}, \Gamma, \Lambda,-}$ be an assumption-based framework and let $\Delta \subseteq \Lambda$. The set of all the
formulas in $\Lambda$ that are attacked by $\Delta$ is denoted by $\Delta^+$, and the set of all the formulas in $\Lambda$ that are defended by $\Delta$
(namely, the formulas whose attackers are in $\Delta^+$) is denoted ${\sf Def}(\Delta)$. We say that $\Delta$ is a {\em complete extension\/} of $\ABF$, 
if it is conflict-free (Definition~\ref{def:semantics}) and ${\sf Def}(\Delta) = \Delta$ (namely, $\Delta$ defends exactly its own elements). 
A $\subseteq$-maximally complete extension of $\ABF$ is called {\em preferred extension\/} of $\ABF$, and a $\subseteq$-minimally complete extension 
of $\ABF$ is called {\em grounded extension\/} of $\ABF$.\footnote{Note that, using the notations in Definition~\ref{def:Dung-semantics}, 
$\Delta \subseteq \Lambda$ is a stable extension of $\ABF$ according to Definition~\ref{def:semantics}, iff $\Delta \cap \Delta^+  = \emptyset$ (namely, 
$\Delta$ is conflict-free) and $\Delta \cup \Delta^+ = \Lambda$.}
} \end{definition}

In~\cite{caminada2017equivalence,CS18},
it was shown that the 3-valued stable models of a normal logic program $\pi$ correspond to the 
complete extensions of $\ABF_{{\sf Norm}}(\pi)$, the M-stable models of $\pi$ correspond to preferred extensions of $\ABF_{{\sf Norm}}(\pi)$, 
and the well-founded model of $\pi$ corresponds to the grounded extension of $\ABF_{{\sf Norm}}(\pi)$. Using the notations in Definition~\ref{def:main-notations}, 
these results can be expressed as follows: 
\begin{itemize}
    \item If $(x,y)$ is a 3-valued stable  
             model of a normal logic program $\pi$, then $\overline{y}$ is a complete 
             extension of $\ABF_{{\sf Norm}}(\pi)$.
    \item If ${\cal E}$ is a complete 
            extension of $\ABF_{{\sf Norm}}(\pi)$, where $\pi$ is a normal logic program, then $(\floor{{\cal E}^+},\underline{{\cal E}})$ is a 3-valued stable 
            model of $\pi$.
\end{itemize}
We note that \cite{caminada2017equivalence,CS18} also establish a connection between the well-founded models [respectively, the M-stable models] for normal logic programs 
and the grounded extensions [respectively, the preferred extensions] of assumption-based argumentation frameworks.\footnote{In~\cite{caminada2017equivalence,CS18} the M-stable
models are called regular models, but as Caminada and Schulz restrict the attention to normal logic programs, these are the same as M-stable models (as we have noted before). 
The comparison of preferred extensions with M-stable models of normal logic programs is justified by the similarity of their intuitions, namely minimization of undecidability.}
In other words, true atoms $p \in x$ on the LP-side correspond to attacked negations of these atoms ${\sim\!p} \in \overline{y}^+$ on the ABA-side, whereas 
false atoms $p \in {\cal A}(\pi) \setminus y$ correspond to accepted negations of these atoms ${\sim\!p} \in\overline{y}$ on the ABA-side. We illustrate this with an example:

\begin{example}
{\rm Consider the normal logic program $\pi_4=\{{\sim\!q} \rightarrow q, \ {\sim\!r} \rightarrow s\}$. This program has a single 3-valued stable model 
$(\{s\},\{s,q\})$, in which $s$ is true, $q$ is undecided and $r$ is false. The induced  assumption-based framework is
${\sf ABF}_{{\sf Norm}}(\pi_4)=\tup{\mathfrak{L}_{\sf MP},\pi_4, \{{\sim\!q}, {\sim\!r}, {\sim\!s}\},-}$. (A fragment of) its attack diagram is the following:

\begin{center}
\begin{tikzpicture}
\tikzset{edge/.style = {->,> = latex'}}

\node (simq) at (2,0) {$\{{\sim\!q}\}$};
\node (simr) at (0,0) {$\{{\sim\!r}\}$};
\node (sims) at (-2,0) {$\{{\sim\!s}\}$};

\draw[edge, loop, looseness=4] (simq) to (simq);
\draw[edge] (simr) to (sims);
\end{tikzpicture}
\end{center}

${\sf ABF}_{{\sf Norm}}(\pi_4)$ has a single preferred extension: $\{{\sim\!r}\}$, which attacks $\{\sim\!s\}$. Intuitively (also according to the 3-valued labeling 
semantics for this case, see~\cite{BCG18}), this makes ${\sim\!r}$ true (i.e., $r$ is false), ${\sim\!s}$ false (and so $s$ is true), and ${\sim\!q}$ undecided (since it is 
neither accepted nor attacked by ${\sim\!r}$).\footnote{In general, an extension ${\cal E} \subseteq \Lambda$ may be associated with a 3-valued `labeling' in which 
the element of ${\cal E}$ are accepted, the elements that are attacked by ${\cal E}$ (those in ${\cal E}^+$) are rejected, and the other elements are undecided 
(see, e.g.,~\cite{BCG18}). In the notations of Definition~\ref{def:3-val-int}, then, this corresponds to the 3-valued interpretation $({\cal E},\Lambda - {\cal E}^+)$.}
This corresponds to the 3-valued stable model of $\pi_4$, which is in-line with the results stated above. 
} \end{example}
} \end{note}

The correspondence described above, between 3-valued stable models of logic programs and extensions of the induced argumentation frameworks, 
does {\em not\/} carry on to disjunctive logic programs. This immediately follows from the fact that, while a 3-valued stable model (respectively: 
a $\leq_i$-minimal 3-valued stable model, a $\leq_i$-maximal 3-valued stable model) of a DLP are not guaranteed to exist~\cite{Pr91},\footnote{Indeed,
Przymusinski~\cite{Pr91} shows that the DLP  
$\pi = \{\rightarrow{\sim\!p} \vee {\sim\!q} \vee {\sim\!r}, \  {\sim\!r} \rightarrow p, \ {\sim\!p} \rightarrow q, \ {\sim\!q} \rightarrow r\}$ 
does not have a 3-valued stable model. There are other semantics that are guaranteed to exist for this program, such as the regular models \cite{YY94} or stationary 
semantics~\cite{przymusinski1991stationary}. We will see below that for these semantics, the correspondence breaks down as well.}
complete (respectively: grounded, preferred) extensions of the induced ABF always exist. Clearly, this consideration is
independent of the used translation. However, the next example shows that, moreover, {\em even when 3-valued stable models of a DLP do exist, the correspondence 
to the related ABF extensions ceases to hold under our translation\/}.

\begin{example}
{\rm Consider the disjunctive logic program $\pi_5 =\{{\sim\!q} \rightarrow q, \ q \rightarrow r \lor s\}$. The 3-valued stable models of $\pi_5$ are 
$(\emptyset, \{q,r\})$ and $(\emptyset, \{q,s\})$. 

The  assumption-based argumentation that is induced by $\pi_5$ is ${\sf ABF}(\pi_5)= \tup{\mathfrak{L},\pi_5, \{{\sim\!q},{\sim\!r},{\sim\!s}\},-}$.
This results in the following (fragment of the) attack diagram:

\begin{center}
\begin{tikzpicture}
\tikzset{edge/.style = {->,> = latex'}}

\node (simq) at (0,0) {$\{{\sim\!q} \}$};
\node (simqsimr) at (-2,-1) {$\{{\sim\!q},\:{\sim\!r}\}$};
\node (simqsims) at (2,-1) {$\{{\sim\!q},\:{\sim\!s}\}$};
\node (simr) at (2,-3) {$\{{\sim\! r}\}$};
\node (sims) at (-2,-3) {$\{{\sim\! s}\}$};

\draw[edge, loop, looseness=4] (simq) to (simq);
\draw[edge, loop, looseness=4] (simqsimr) to (simqsimr);
\draw[edge, loop, looseness=4] (simqsims) to (simqsims);

\draw[edge] (simqsimr) to (sims);
\draw[edge] (simqsimr) to (simqsims);

\draw[edge] (simqsims) to (simr);
\draw[edge] (simqsims) to (simqsimr);
\end{tikzpicture}
\end{center}

The unique complete set of assumptions in this case is $\emptyset$. This set does not attack any assumption, thus corresponding to $(\emptyset,\{q,r,s\})$. 
We thus see here that there is a discrepancy between the three-valued stable models of $\pi_5$ and complete extensions of ${\sf ABF}(\pi_5)$.
} \end{example}

\begin{note}
{\rm A closer inspection of the last example may reveal some of the reasons for the inadequacy of our translation under 3-valued semantics: In the 3-valued setting, 
the interpretations $(\emptyset,\{q,r\})$, $(\emptyset,\{q,s\})$, and $(\emptyset,\{q,r,s\})$, are all models of $\pi_5$, whereas only the first two are stable.
This means that in this setting, ${\sf f}$-assignments are not minimized (indeed, in either of the first two models, the set of the atoms that are assigned ${\sf f}$
properly contains the corresponding set of the other model). To reflect such a minimization in the induced argumentation framework, additional rules are required. 
For instance, in the last example, if $r\lor s$ is undecided (since ${\sim\!q}$ is undecided), one needs a rule that would bring about mutual attacks between 
${\sim\!r}$ and ${\sim\!s}$.
} \end{note}

We conclude this section by noting that some other semantics for disjunctive logic programs have been suggested in the literature, e.g.\ 
stationary semantics~\cite{Pr91}, (weakly) supported semantics~\cite{brass1995disjunctive,heyninck2022non}, determining inference 
semantics~\cite{shen2019determining}, semi-equilibrium semantics~\cite{amendola2016semi,heynincknon}, and several variants of the well-founded
semantics~\cite{alcantara2005well,heynincknon,knorr2007comparison,seipel1998alternating}.  Whether these semantics can be characterized by argumentation 
frameworks is a subject for future work.

\section{Extended Disjunctive Logic Programs}
\label{sec:EDLP}

Intuitively, the connective $\sim$ may be understood as representing `negation as failure' (to prove the converse)~\cite{Cl78}. This kind of negation is also known 
as `weak negation'. Extended (disjunctive) logic programs are obtained by introducing another negation connective, $\neg$, which acts as an explicit (strong, classical) 
negation, and allowing $\neg$-literals (namely, atomic formulas or their explicit negation) instead of the atoms in the rules~$(\star)$ of Definition~\ref{def:DLP}. Formally: 

\begin{definition}
\label{def:EDLP}
{\rm An {\em extended disjunctive logic program\/} $\pi$~\cite{Pr91} is a finite set of rules of the form
\[ (\star\star) \hspace*{15mm} l_1, \ldots, l_m, {\sim\!l_{m+1}}, \ldots, {\sim\!l_{m+k}}\rightarrow l_{m+k+1} \lor \ldots \lor l_{m+k+n}  \] 
where $m,k \geq 0$, $n \geq 1$, and each $l_i$ $(1 \leq i \leq m+k+n)$ is a {\em $\neg$-literal\/}, that is: $l_i \in \{p_i,\neg p_i\}$ for some $p_i \in {\cal A}(\pi)$.
} \end{definition}

The semantics of extended disjunctive logic programs is defined by reduction to disjunctive logic programs, using a standard method in logic programming that views 
a literal of the form $\neg p$ as a strangely written atomic formula. Under this view, rules of the form~$(\star\star)$ may be treated just as if they are of the form~$(\star)$. 
Relations between atomic formulas that correspond to positive and negative occurrences of the same atomic assertion (namely, without or with a leading explicit negation, 
respectively) are made on the semantic level. Indeed, while the semantics of disjunctive logic programs is two-valued (where an atom $p$ is verified in a model $M$ if $p \in M$ 
and is falsified otherwise), the semantics of extended disjunctive programs is {\em four-valued\/}. To see this, given a rule $r$ of the form~$(\star\star)$, replace every 
occurrence of $p$ by $p^+$ and every occurrence of $\neg p$ by $p^-$. Denote by $r^{\pm}$  by the resulting rule (and by $\pi^{\pm}$ the corresponding program).
Suppose that $M$ is a model of $r^{\pm}$. Then:
\begin{itemize}
     \item $p$ is {\em true\/} in $M$, if $p^+ \in M$ and $p^- \not\in M$,
     \item $p$ is {\em false\/} in $M$, if $p^+ \not\in M$ and $p^- \in M$,
     \item $p$ is {\em contradictory\/} in $M$, if $p^+ \in M$ and $p^- \in M$,
     \item $p$ is {\em undecided\/} in $M$, if $p^+ \not\in M$ and $p^- \not\in M$.
\end{itemize}

The next example illustrates this distinction between the semantics of disjunctive logic programs and extended disjunctive logic programs, and shows how this is reflected by 
the stable extensions of the corresponding ABFs.

\begin{example}
{\rm \
\begin{enumerate}
     \item Let $\pi_6 = \{ {\sim\!p} \rightarrow \neg p \}$. This program depicts a `closed world assumption'~\cite{Re78} regarding $p$:  Any statement that is true is 
             also known to be true, therefore, if $p$ is not  known to be true, it must be false. The conversion of $\pi_6$ to a normal logic program is 
             $\pi_6^{\pm} = \{ {\sim\!p^+} \rightarrow p^- \}$,
             the stable model is $\{p^-\}$, which in turn is associated with the the two-valued stable model of $\pi_6$, in which $p$ is false (see above). In the corresponding 
             assumption-based framework $\ABF(\pi_6) = \tup{{\mathfrak L},\pi_6^{\pm},\{{\sim\!p}^+,{\sim\!p}^-\},-}$ we have that $\{{\sim\!p}^+\}$ attacks 
             $\{{\sim\!p}^-\}$, and so $\{{\sim\!p}^+\}$ is the unique stable model in this case (as indeed indicated in Proposition~\ref{prop:model-to-extension}).            

             Note that if $\pi_6$ is extended with $\{p\}$, the situation is reversed: the sole stable extension of $(\pi_6 \cup \{p\})^{\pm}$ is $\{p^+\}$, and in the 
             corresponding ABF we have that  $\{{\sim\!p}^+\}$ is attacked by $\{{\sim\!p}^-\}$. Thus, the latter is the unique stable extension in this case, as expected.
    \item Consider the following extended disjunctive logic program: 
             $$\pi_7=\{\rightarrow \lnot p\lor \lnot q, \ {\sim\!\!\lnot p}\rightarrow p, \ {\sim\!\!\lnot q}\rightarrow q, \ p\rightarrow s, \ q\rightarrow s\}.$$ 
             We obtain the following translated disjunctive logic program:
             $$\pi_7^{\pm}=\{ \rightarrow  p^-\lor  q^-, \ {\sim\!p}^-\rightarrow p^+, \ {\sim\!q}^-\rightarrow q^+, \ p^+\rightarrow s^+, \ q^+\rightarrow s^+\}.$$ 
             The corresponding assumption-based framework is:
             \[\ABF(\pi_7^{\pm}) = \tup{{\mathfrak L},\pi_7^{\pm},\{{\sim\!p}^+,{\sim\!p}^-,{\sim\!q}^+,{\sim\!q}^-,{\sim\!s}^+,{\sim\!s}^-\},-}.\] 
             A fragment of the resulting attack diagram is given below:
     
\begin{center}
\begin{tikzpicture}
\tikzset{edge/.style = {->,> = latex'}}

\node (simq-) at (2,0) {$\{\sim\!q^- \}$};
\node (simp-) at (-2,0) {$\{\sim\!p^-\}$};
\node (simq+) at (2,2) {$\{\sim\!q^+ \}$};
\node (simp+) at (-2,2) {$\{\sim\!p^+\}$};
\node (sims+) at (0,3) {$\{\sim\!s^+ \}$};
\node (sims-) at (0,1.1) {$\{\sim\!s^- \}$};

\draw[edge] (simq-) to (simp-);
\draw[edge] (simp-) to (simq-);
\draw[edge] (simq-) to (simq+);
\draw[edge] (simp-) to (simp+);
\draw[edge] (simq-) to (sims+);
\draw[edge] (simp-) to (sims+);
\end{tikzpicture}
\end{center}

             There are two stable sets of assumptions, $\{\sim\!q^-, \sim\!p^+,\sim\!s^-\}$ and $\{\sim\!p^-, \sim\!q^+,\sim\!s^-\}$, corresponding (respectively) to the stable models 
             of $\pi_7^{\pm}$ $\{p^-,q^+,s^+\}$ and $\{q^-,p^+,s^+\}$, which in turn correspond (respectively) to the two stable models of $\pi_7$ $\{\lnot p, q, s\}$ and $\{\lnot q, p,s\}$.

    \item Let $\pi_8=\{\rightarrow p, \ \rightarrow q, \ \rightarrow r, \ \rightarrow \neg p \vee \neg q\}$. This logic program reflects an inconsistent information regarding
             $\{p,q\}$. The translated program is $\pi_8^{\pm}=\{\rightarrow p^+, \ \rightarrow q^+, \ \rightarrow r^+, \ \rightarrow p^- \vee q^-\}$, and the induce 
             induced ABFs is $\ABF(\pi_8^{\pm}) = \tup{{\mathfrak L},\pi_8^{\pm},\{{\sim\!p}^+,{\sim\!p}^-,{\sim\!q}^+,{\sim\!q}^-,{\sim\!r}^+\},-}$.
             Part of the attack diagram in this case is the following:
              
\begin{center}
\begin{tikzpicture}
\tikzset{edge/.style = {->,> = latex'}}

\node (empty) at (0,2) {$\{\}$};
\node (simp+) at (-2,0) {$\{\sim\!p^+\}$};
\node (simq+) at (0,0) {$\{\sim\!q^+\}$};
\node (simr+) at (2,0) {$\{\sim\!r^+\}$};
\node (simp-) at (-1,-1) {$\{\sim\!p^-\}$};
\node (simq-) at (1,-1) {$\{\sim\!q^-\}$};

\draw[edge] (simq-) to (simp-);
\draw[edge] (simp-) to (simq-);
\draw[edge] (empty) to (simp+);
\draw[edge] (empty) to (simq+);
\draw[edge] (empty) to (simr+);
\end{tikzpicture}
\end{center}

             The stable extensions of  $\ABF(\pi_8^{\pm})$ are therefore $\{{\sim\!q}^-\}$  and $\{{\sim\!p}^-\}$, corresponding to the stable models 
             $\{p^+,p^-,q^+,r^+\}$ and $\{q^+,q^-,p^+,r^+\}$ of $\pi_8^{\pm}$. In both models, inconsistency is `localized': $p$ is the only contradictory atoms 
             in the former model, and $q$ is the only contradictory atom in the latter.
\end{enumerate}
           
} \end {example}

We observe that in Item~3 of the last example the two stable models of the logic program $\pi_8^{\pm}$ coincide with the {\em paraconsistent stable models\/} 
of $\pi_8$ according to Sakama and  Inoue~\cite{SI95}. Next, we show that this is not a coincidence. First, we recall the definition of the paraconsistent stable 
semantics in~\cite{SI95}:

\begin{definition}
{\rm Let $M$ be a set of $\lnot$-literals.
\begin{itemize}
\item The satisfiability relation $\models$ is defined as follows (cf.\ Definition~\ref{def:models}, where $M$ is a set of atoms):
\begin{itemize}
\item $M \models l$ iff  $l\in M$ (for any $\lnot$-literal $l$)
\item $M \models l_1\lor\ldots\lor l_n$ iff $M\models l_i$ for some $1 \leq i \leq n$ (for any set of $\lnot$-literals $\{l_1,\ldots,l_n\}$),
\item $M \models  l_1, \ldots, l_m, {\sim\!l_{m+1}}, \ldots, {\sim\!l_{m+k}}\rightarrow l_{m+k+1} \lor \ldots \lor l_{m+k+n} $ iff whenever $M \models l_i$ for every 
         $1 \leq i \leq m$ and $M \not\models l_{m+i}$ for every $1 \leq i \leq k$, then $M \models l_{m+k+1} \lor \ldots \lor l_{m+k+n}$.
\end{itemize}
We say that $M$ is a {\em model\/} of an extended disjunctive logic program $\pi$ iff $M\models r$ for every $r\in \pi$. 

\item The (2-valued) {\em reducts\/} of extended DLPs can be constructed just as in the case without strong negation (cf.\ Definition~\ref{def:stable-model}), 
that is: $\pi^M$ consists of all the rules 
$ l_1, \ldots, l_m, \rightarrow l_{m+k+1} \lor \ldots \lor l_{m+k+n}$ such that 
$ l_1, \ldots, l_m, {\sim\!l_{m+1}}, \ldots, {\sim\!l_{m+k}}\rightarrow l_{m+k+1} \lor \ldots \lor l_{m+k+n} \in \pi$ and $l_{m+i}\not\in M$ for every $1 \leq i \leq k$.

\item A set of literals $M$ is a \emph{paraconsistent stable model\/} of $\pi$, iff $M$ is a $\subseteq$-minimal model of $\pi^M$. \footnote{These notions are 
identical to the corresponding notions of Sakama and  Inoue~\cite{SI95}, adapted to our notations.}
\end{itemize}
} \end{definition}

We now define a translation $\delta$ from $\pm$-atomic formulas to $\lnot$-literals by: $\delta(p^{-})=\lnot p$, $\delta(p^{+})= p$,  and denote
$\delta(\Theta)=\{\delta(p^\pm) \mid p^\pm \in\Theta\}$. 
Likewise, we define: 
$\delta^{-1}(\lnot p)=p^-$, $\delta^{-1}( p)=p^+$, and denote $\delta^{-1}(\Theta)=\{\delta^{-1}(l)\mid l\in\Theta\}$.

The following correspondence result is now easily obtained:

\begin{proposition}
let $\pi$ be an extended disjunctive logic program. Then:
\begin{itemize}
     \item [a)] If ${\cal E}$ is a stable extension of $\ABF(\pi^{\pm})$, then $\delta(\underline{{\cal E}})$ is a paraconsistent stable model of $\pi$. 
     \item [b)] If $M$ is a paraconsistent stable model of $\pi$, then $\overline{\delta^{-1}(M)}$ is a stable extension of $\ABF(\pi^{\pm})$.
\end{itemize}     
\end{proposition}

\begin{proof}
The proof immediately follows from the following two observations:
\begin{itemize}
\item A set $M$ of $\neg$-literals  is a paraconsistent stable model of $\pi$ iff $\delta(M)$ is a stable model of $\pi^\pm$.
        (This is straightforward from the definition of a paraconsistent stable model. In particular, there are no rules governing interactions between 
        $\lnot p$ and $p$ for any atom $p$, thus $p$ and $\lnot p$ behave as unrelated literals in this semantics.) 
\item If ${\cal E}$ is a stable extension of $\ABF(\pi^{\pm})$, then $\underline{{\cal E}}$ is a stable model of $\pi^{\pm}$, and vice-versa:
        if $M$ is a  stable model of $\pi^{\pm}$ then $\overline{M}$ is a stable extension of $\ABF(\pi^{\pm})$.
        (This follows from Propositions~\ref{prop:model-to-extension} and~\ref{prop:extension-to-model}.)  \qed
\end{itemize}
\end{proof}

By the last proposition it follows that our approach allows to capture the paraconsistent stable semantics for extended disjunctive logic programming 
introduced in~\cite{SI95} by a rather simple revision of the language. We refer also to~\cite{Wa22,Wa24}, where a further 
study of argumentative representations of extended disjunctive logic programming has been undertaken. The latter requires to extend the set of rules in the translation.
Thus, depending on the application at hand, each approach has its benefits and downsides.

\section{Related Work and Conclusion}
\label{sec:conclusion}

This work generalizes translations from logic programming into assumption-based argumentation to cover also (extended) disjunctive logic programs. 
Our framework was introduced in~\cite{HA19} and then generalized in~\cite{Wa22}, where 
Wakaki shows a semantic correspondence between generalized assumption-based argumentation 
(Definition~\ref{def:ABF}), based on the logic in Section~\ref{sec:translation}, and extended disjunctive logic programming.
In~\cite{Wa24} this correspondence is carried on to further formalisms for non-monotonic reasoning, including disjunctive default theory, parallel circumscription, 
and prioritized circumscription. Even though a different symbol for disjunction (denoted by $\mid\:$) is used in~\cite{Wa22,Wa24}, the semantics of this 
connective is the same as the one we consider here, when programs are restricted to $\neg$-free disjunctive logic programs (i.e., programs without strong negation, 
see also the appendix). When looking at more general classes of logic programs, different semantics have been proposed, e.g.\ Gelfond's answer set semantics for 
extended disjunctive logic programs (which are represented argumentatively in~\cite{Wa22,Wa24}). Thus, the family of extended logic programs considered here is 
somewhat different than the extended logic programs considered in~\cite{Wa22,Wa24}. In that respect, some difficulties that arise when representing disjunctive 
information in Reiter's default theory~\cite{reiter1980logic} are indicated and resolved in~\cite{Wa24}.
Rationality postulates and connections to related non-monotonic formalisms, such as  answer-set semantics and disjunctive default theories, are also discussed 
in~\cite{Wa22,Wa24}.

A work with a similar motivation is presented in~\cite{wang2000argumentation}, where a representation of DLPs by structured argumentation frameworks 
is proposed. In this framework, the assumptions are disjunctions of negated atoms ${\sim\!p}_1\lor \ldots \lor {\sim\!p}_n$, instead of just negated atoms 
as in our translation. Furthermore, unlike~\cite{wang2000argumentation}, we define our translation in assumption-based  argumentation, which means 
that meta-theoretical insights (e.g., complexity results~\cite{dimopoulos2002computational} or results on properties of the non-monotonic consequence
relations~\cite{AH25,vcyras2015non,HA20}), dialectical proof theories~\cite{du06dial,du06dial-2}, and different 
implementations~\cite{craven2013graph,toni2013generalised}, can be directly used. 

A representation of disjunctive logic programming by \emph{abstract argumentation\/} is studied in~\cite{bochman2003collective}. In that translation, 
nodes in the argumentation framework correspond to single assumptions ${\sim\!p}$, as opposed to \emph{sets\/} of such assumptions as in our translation. 
Because of this, the translation in~\cite{bochman2003collective} has to allow for attacks on \emph{sets of nodes\/}, instead of just nodes, 
necessitating a generalization of Dung's abstract argumentation frameworks \cite{Dung1995}. Since we work in assumption-based argumentation,
where nodes in the argumentation framework correspond to sets of assumptions, the argumentation frameworks generated by our translation are normal 
abstract argumentation frameworks. This is important since in that way results and implementations for abstract argumentation frameworks can be 
straightforwardly used and applied. Yet, the characterizations in~\cite{bochman2003collective} 
of 3-valued semantics of disjunctive logic programs by generalized abstract argumentation frameworks could provide valuable insights into the conditions 
(if any) under which a similar correspondence may be established with assumption-based argumentation.

Another related, but more distant line of work, is concerned with the integration of disjunctive reasoning in structured argumentation with defeasible rules
(see~\cite{BHS17,beirlaen2018critical}). We differ from this work both in the goal and the form of the knowledge bases.

In future work, we plan to generalize our results to other semantics for disjunctive logic programming, such as the disjunctive 
well-founded~\cite{brass1998characterizations}, extended well-founded~\cite{ross1992procedural}, and stationary semantics~\cite{przymusinski1991stationary}. 
Some of these semantics are based on ideas that are very similar to the ideas underlying several well-known argumentation semantics. 
Likewise, for example, both the stationary semantics for disjunctive logic programming and the preferred semantics from abstract
argumentation~\cite{Dung1995} can be characterized using three instead of two ``truth values''. Indeed, as noted in Section~\ref{sec:other-semantics},
for normal logic programs the correspondence between the 3-valued stable models for normal logic programs~\cite{Prz90} and complete labelings for 
ABA framework has been proven by Caminada and Schulz in~\cite{caminada2017equivalence,CS18}. On what conditions, if any, the correspondence holds also
for disjunctive logic programs, is still an open question. Finally, we hope to extend our results to more expressive languages, such as epistemic~\cite{gelfond1994logic}
and parametrized~\cite{gonccalves2010parametrized} logic programming.

\subsection*{Acknowledgments} 

We would like to thank the reviewers of this paper for their detailed and insightful comments. The work on this paper was partially 
supported by the Israel Science Foundation (Grant No. 550/19).

\bibliographystyle{plain}  
\bibliography{main}

\begin{appendix}
\label{appendix}

\section{A Note on the Relation Between $\vee$ and $\mid$ in $\neg$-free EDLPs}

In this appendix we show that, for normal disjunctive logic programs, the semantics of the connective $\mid$ as introduced by~\cite{gelfond1991classical} and 
considered by~\cite{Wa22,Wa24}, is the same as the disjunction $\vee$ that we consider here. This is  claimed in the conclusion of the main paper.

\begin{definition}
\label{def:GL-EDLP}
{\rm An {\em extended disjunctive logic program in the sense of~{\rm\cite{gelfond1991classical}}} (EDLP, in short) is a finite set $\pi$
of rules of the form
\[ l_1, \ldots, l_m, {\sim\!l_{m+1}}, \ldots, {\sim\!l_{m+k}} \rightarrow l_{m+k+1} \mid \ldots \mid l_{m+k+n}  \]  
where $m,k \geq 0$, $n \geq 1$, and each $l_i$ $(1 \leq i \leq m+k+n)$ is a {\em $\neg$-literal\/} (that is: $l_i \in \{p_i,\neg p_i\}$ for some $p_i \in {\cal A}(\pi)$).
A rule is {\em $\sim$-free\/} if $k=0$, and it is {\em $\neg$-free\/}, if every literal in the rule is an atom. An EDLP $\pi$ is $\sim$-free (respectively, $\neg$-free)
if every rule in it is $\sim$-free (respectively, $\neg$-free).
} \end{definition}

Note that the rules in an EDLP, as defined in~\cite{gelfond1991classical}, are similar to those in~$(\star\star)$ for extended disjunctive logic programs 
(Definition~\ref{def:EDLP}, following~\cite{Pr91}), except that the disjunctive connective $\vee$ used in the rule heads of~$(\star\star)$ is replaced here by the 
connective $\mid$. When restricted to $\neg$-free EDLPs, we obtain disjunctive logic programs (DLPs) as in Definition~\ref{def:DLP}, i.e., rules of
the form~$(\star)$ again with $\mid$ replacing $\vee$ in the rule heads. The reason for initially defining EDLPs in the sense of~\cite{gelfond1991classical} and 
subsequently reducing them to DLPs (also in the sense of~\cite{gelfond1991classical}) is that EDLPs are essential for the semantics developed in~\cite{Wa22,Wa24}, 
while DLPs provide the appropriate setting for establishing the correspondence between $\vee$ and $\mid$.

\medskip
In~\cite{Wa24}, semantics of an EDLP in the sense of~\cite{gelfond1991classical} is defined as follows:

\begin{definition}
\label{def:elp:answer:set}
{\rm A set $M$ of $\lnot$-literals is an \emph{answer set\/} of a $\sim$-free EDLP $\pi$, if it is a $\subseteq$-minimal set satisfying the following two conditions:
\begin{enumerate}
\item for every rule $l_1, \ldots, l_m \rightarrow l_{m+k+1} \mid \ldots \mid l_{m+k+n}$, if $l_i \in M$ for every $i=1 \ldots m$ then $l_j\in M$ for some $m+k+1 \leq  j \leq m+k+n$,  and
\item if $M$ contains a pair of literals $p$ and $\lnot p$ then $M$ contains any literal in the language of $\pi$.
\end{enumerate}
The \emph{reduct\/} $\pi^M$ is defined as 
\[ \pi^M =  \left \{
    \begin{array}{l} 
           l_1, \ldots, l_m \rightarrow l_{m+k+1} \mid \ldots \mid l_{m+k+n} \ \mbox{ where } \smallskip \\
           l_1, \ldots, l_m, {\sim\!l_{m+1}}, \ldots, {\sim\!l_{m+k}} \rightarrow l_{m+k+1} \mid \ldots \mid l_{m+k+n}  \in \pi  \mbox{ and }      
           \{{\sim\!l_{m+1}}, \ldots, {\sim\!l_{m+k}} \} \cap  M=\emptyset 
     \end{array} \right \}  \]
Finally, $M$ is an answer set of an EDLP $\pi$ if $M$ is an answer set of its reduct $\pi^{M}$.\footnote{In~\cite{Wa24} it is written that ``$M$ is an answer set of $\pi$ if $x$ is 
\emph{the} answer set of $\pi^M$\:'' (the emphasis is ours), but this is clearly mistaken as even $\sim$-free EDLPs admit more than one answer set.}
} \end{definition}

Given a $\neg$-free EDLP in the sense of~\cite{gelfond1991classical} $\pi$ (Defitinition~\ref{def:GL-EDLP}) we denote by ${\sf Convert}_{\:\mid\Rightarrow\vee}(\pi)$ the 
DLP that is obtained from $\pi$ by replacing all the connectives $\mid$ in the heads of the rules of $\pi$ by $\vee$. Conversely, for a DLP $\pi$ (Definition~\ref{def:DLP}) we denote 
by ${\sf Convert}_{\:\vee\Rightarrow\mid}(\pi)$ the $\neg$-free EDLP in the sense of~\cite{gelfond1991classical} that is obtained from $\pi$ by replacing all the connectives $\vee$ 
in the heads of the rules of $\pi$ by $\mid$.

\begin{proposition} 
$M$ is an answer set of an $\neg$-free EDLP $\pi$ iff $M$ is a stable model of ${\sf Convert}_{\:\mid\Rightarrow\vee}(\pi)$. Likewise,  $M$ is a stable model 
of a DLP $\pi$, iff $M$ is an answer set of ${\sf Convert}_{\:\vee\Rightarrow\mid}(\pi)$.
\end{proposition}

\begin{proof}
Observe first that no $\subseteq$-minimal model that satisfies Condition~1 of Definition~\ref{def:elp:answer:set} contains any negated literal $\lnot p$, 
and thus ``logical explosion'' (Condition 2 of Definition~\ref{def:elp:answer:set}) will never occur. The rest of the proof is immediate, as clearly the reduct according to
Definition~\ref{def:elp:answer:set} is identical to the Gelfond-Lifschitz reduct (Definition~\ref{def:stable-model}). \qed
\end{proof}

\end{appendix}

\end{document}